\gdef\@copyrightpermission{
  \begin{minipage}{0.3\columnwidth}
    \includegraphics[width=0.90\textwidth]{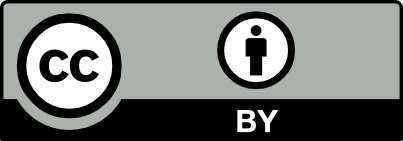}
  \end{minipage}\hfill
  \begin{minipage}{0.7\columnwidth}
    \href{https://creativecommons.org/licenses/by/4.0/}{This work is licensed under a Creative Commons Attribution International 4.0 License.}
  \end{minipage}
  \vspace{5pt}
}
\begin{document}

%%
%% The "title" command has an optional parameter,
%% allowing the author to define a "short title" to be used in page headers.
\title{Session-Level Dynamic Ad Load Optimization using Offline Robust Reinforcement Learning}

%%
%% The "author" command and its associated commands are used to define
%% the authors and their affiliations.
%% Of note is the shared affiliation of the first two authors, and the
%% "authornote" and "authornotemark" commands
%% used to denote shared contribution to the research.
\author{Tao Liu$^*$}
\thanks{$^*$The first two authors contributed equally.}
\affiliation{%
  \institution{Meta}
  \streetaddress{}
  \country{Sunnyvale, CA, USA}
}
\email{tliu97@meta.com}

\author{Qi Xu$^*$}
\affiliation{%
  \institution{Meta}
  \streetaddress{}
  \country{Sunnyvale, CA, USA}
}
\email{xuqi@meta.com}

\author{Wei Shi}
\affiliation{%
  \institution{Meta}
  \streetaddress{}
  \country{Sunnyvale, CA, USA}
}
\email{weishi0079@meta.com}

\author{Zhigang Hua}
\affiliation{%
  \institution{Meta}
  \streetaddress{}
  \country{Sunnyvale, CA, USA}
}
\email{zhua@meta.com}

\author{Shuang Yang}
\affiliation{%
  \institution{Meta}
  \streetaddress{}
  \country{Sunnyvale, CA, USA}
}
\email{shuangyang@meta.com}

%%
%% By default, the full list of authors will be used in the page
%% headers. Often, this list is too long, and will overlap
%% other information printed in the page headers. This command allows
%% the author to define a more concise list
%% of authors' names for this purpose.
\renewcommand{\shortauthors}{Tao Liu, Qi Xu, Wei Shi, Zhigang Hua, Shuang Yang}

%%
%% The abstract is a short summary of the work to be presented in the
%% article.
\begin{abstract}
Session-level dynamic ad load optimization aims to personalize the density and types of delivered advertisements in real time during a user's online session by dynamically balancing user experience quality and ad monetization. Traditional causal learning-based approaches struggle with key technical challenges, especially in handling confounding bias and distribution shifts. In this paper, we develop an offline deep Q-network (DQN)-based framework that effectively mitigates confounding bias in dynamic systems and demonstrates more than 80\% offline gains compared to the best causal learning-based production baseline. Moreover, to improve the framework's robustness against unanticipated distribution shifts, we further enhance our framework with a novel offline robust dueling DQN approach. This approach achieves more stable rewards on multiple OpenAI-Gym datasets as perturbations increase, and provides an additional 5\% offline gains on real-world ad delivery data.

Deployed across multiple production systems, our approach has achieved outsized topline gains. Post-launch online A/B tests have shown double-digit improvements in the engagement-ad score trade-off efficiency, significantly enhancing our platform's capability to serve both consumers and advertisers.
\end{abstract}

%%
%% The code below is generated by the tool at http://dl.acm.org/ccs.cfm.
%% Please copy and paste the code instead of the example below.
\begin{CCSXML}
<ccs2012>
    <concept>
       <concept_id>10002951.10003260.10003272.10003276</concept_id>
       <concept_desc>Information systems~Social advertising</concept_desc>
       <concept_significance>500</concept_significance>
   </concept>
   <concept>
       <concept_id>10010147.10010257.10010258.10010261.10010272</concept_id>
       <concept_desc>Computing methodologies~Sequential decision making</concept_desc>
       <concept_significance>500</concept_significance>
       </concept>
</ccs2012>
\end{CCSXML}

\ccsdesc[500]{Information systems~Social advertising}
\ccsdesc[500]{Computing methodologies~Sequential decision making}
%%
% \begin{CCSXML}
% <ccs2012>
%  <concept>
%   <concept_id>00000000.0000000.0000000</concept_id>
%   <concept_desc>Do Not Use This Code, Generate the Correct Terms for Your Paper</concept_desc>
%   <concept_significance>500</concept_significance>
%  </concept>
%  <concept>
%   <concept_id>00000000.00000000.00000000</concept_id>
%   <concept_desc>Do Not Use This Code, Generate the Correct Terms for Your Paper</concept_desc>
%   <concept_significance>300</concept_significance>
%  </concept>
%  <concept>
%   <concept_id>00000000.00000000.00000000</concept_id>
%   <concept_desc>Do Not Use This Code, Generate the Correct Terms for Your Paper</concept_desc>
%   <concept_significance>100</concept_significance>
%  </concept>
%  <concept>
%   <concept_id>00000000.00000000.00000000</concept_id>
%   <concept_desc>Do Not Use This Code, Generate the Correct Terms for Your Paper</concept_desc>
%   <concept_significance>100</concept_significance>
%  </concept>
% </ccs2012>
% \end{CCSXML}

% \ccsdesc[500]{Do Not Use This Code~Generate the Correct Terms for Your Paper}
% \ccsdesc[300]{Do Not Use This Code~Generate the Correct Terms for Your Paper}
% \ccsdesc{Do Not Use This Code~Generate the Correct Terms for Your Paper}
% \ccsdesc[100]{Do Not Use This Code~Generate the Correct Terms for Your Paper}

% %%
% %% Keywords. The author(s) should pick words that accurately describe
% %% the work being presented. Separate the keywords with commas.
\keywords{Ad Load Optimization, Offline Reinforcement Learning, Robust Reinforcement Learning}
% \keywords{Do, Not, Us, This, Code, Put, the, Correct, Terms, for,
%   Your, Paper}

%% A "teaser" image appears between the author and affiliation
%% information and the body of the document, and typically spans the
%% page.
% \begin{teaserfigure}
%   \includegraphics[width=\textwidth]{sampleteaser}
%   \caption{Seattle Mariners at Spring Training, 2010.}
%   \Description{Enjoying the baseball game from the third-base
%   seats. Ichiro Suzuki preparing to bat.}
%   \label{fig:teaser}
% \end{teaserfigure}

% \received{20 February 2007}
% \received[revised]{12 March 2009}
% \received[accepted]{5 June 2009}

%%
%% This command processes the author and affiliation and title
%% information and builds the first part of the formatted document.

\maketitle

\section{Introduction}
\label{sec:intro}
Ad monetization and user engagement are two primary goals of interest of a social networking or e-commerce platform \cite{yan2020ads, carrion2021blending}. By personalizing the quantity and pattern of the advertisements that are incorporated into the user's organic consumption journey, ad load optimization has proven to be an effective approach to achieve the optimal trade-off between these two goals \cite{yan2020ads}.

In general, there are two ways to optimize ad load. The ``static'' approach personalizes the ad load configuration for each user and applies the same ad load throughout the user trajectory, while the ``dynamic'' approach optimizes the ad load configuration in real time, e.g., during an online user session. The latter, which is more challenging, is our focus in this paper.

A session generally refers to a period during which a user is actively engaged with the platform. This can include activities such as browsing newsfeeds, posting updates, commenting on other users' posts, sending messages, etc. The ad load of a session could be tuned by certain product features, e.g., by changing the position of the first ad and the minimum gap between two consecutive ads in a typical newsfeed product \cite{yan2020ads}. Increasing ad load is expected to boost short-term monetization at the cost of hurting user engagement, which eventually could cause damage to long-term monetization opportunities. The key is to control the ad load in real time within an online session to achieve the optimal balance between ad monetization and user engagement. 

\begin{figure}[!htb]
    \centering
    \includegraphics[width=0.5\textwidth]{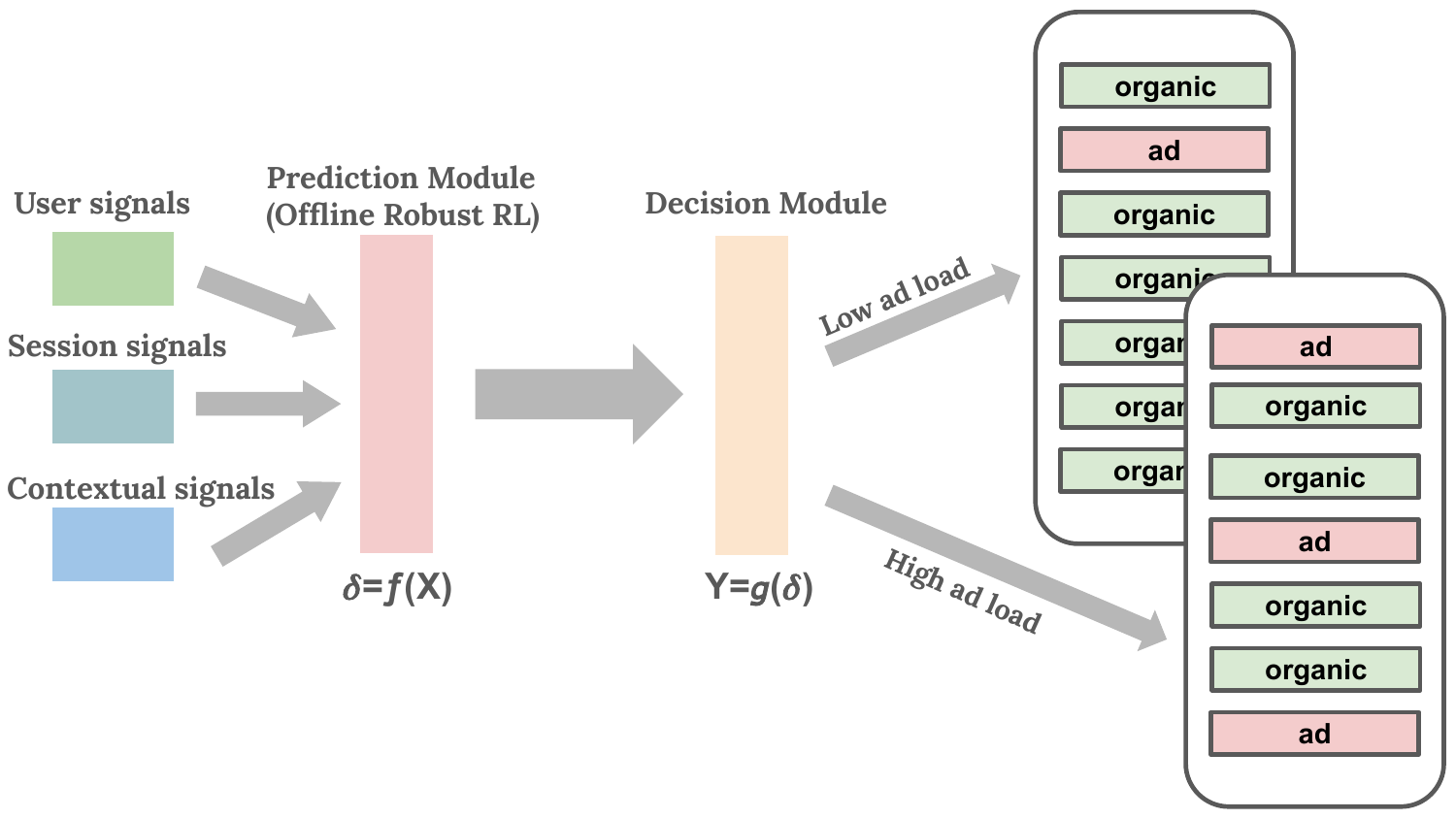}
    \caption{Structure of the session-level dynamic ad load optimization system. A novel offline robust reinforcement learning approach is applied in the prediction module, which generates state-action values as inputs to the decision module (see Fig. \ref{fig:robust-ddqn} for the detailed structure).}
    \label{fig:ad-supply}
\end{figure}

Figure \ref{fig:ad-supply} illustrates a high-level architecture of a session-level dynamic ad load optimization system. It is composed of two major components: 1) prediction and 2) decision. The prediction module consumes a variety of signals $X$ (e.g., user's age, session trigger causes, and short-term user historical interactions) as inputs, and is responsible for providing high-quality predictions for the potential outcome $\delta = f(X)$ with respect to the following possible decisions (e.g., applying low or high ad load). In this case, the outcome could be the ad monetization treatment effect $\Delta^{rev}$ and engagement treatment effect $\Delta^{eng}$ from the to-be-triggered session. On the other hand, the decision module selects the optimal decision based on the predicted outcomes and applies it accordingly to the ongoing session. In this work, according to business needs, the decision module is set to choose top individuals based on $\Delta^{rev} + \alpha \times\Delta^{eng}$, where $\alpha > 0$ is an adjustable hyperparameter. The prediction module plays a key role in the system since the decision module is relatively straightforward and depends on the former.

The prediction problem is highly counterfactual since we can only observe the direct outcome of one treatment at a time (i.e., low ad load or high ad load), but not the treatment effect (i.e., the difference in engagement and monetization between a ``treatment'' group and a ``control'' baseline). Naturally, the prediction module is usually powered by causal learning models such as meta-learners \cite{liu2023explicit, wu2023dnet,kunzel2019metalearners, nie2021quasi}.
When it comes to session-level dynamic ad load optimization, the real-time dynamics pose severe challenges that traditional causal learning-based approaches may struggle with. 
\begin{itemize}
    \item \textbf{Confounding bias.} The confounding bias results from failing to control for the cause variables that one should have controlled for \cite{elwert2014endogenous}. In our case, treatments in previous sessions could have indirect impacts on features, treatment, and outcomes in the current session, which acts as part of hidden confounders. These sequential impacts exacerbate confounding bias in our session-level dynamic ad load optimization system. 

    \item  \textbf {Distribution shifts.} User behavior will change over time, which may bring severe Y|X-shifts (conditional relationships between the outcome and covariate) and significant performance drops due to missing variables and hidden confounders \cite{liu2023need}.
\end{itemize}
In the literature, debiased causal modeling has been developed to alleviate the confounding bias via Neyman-orthogonal moments and cross-fitting \cite{chernozhukov2018double, tang2022debiased}, but unfortunately does not perform well in our dynamic system due to sequential impacts of previous treatments. Therefore, there remain open research questions on how to better mitigate confounding bias in our dynamic system and alleviate the challenge of distribution shifts.

In this paper, we seek to systematically address these challenges. Our contributions include the following:

\begin{itemize}
\item We first formulate the session-level dynamic ad load optimization problem as a Markov decision process (MDP) and prototype an offline deep Q-network (DQN) approach. The challenge of confounding bias is mitigated by introducing previous actions (i.e., ad load decisions in previous sessions) as part of current states, leveraging the flexibility of RL modeling. The preliminary experiments show promising offline gains of our approach compared to current meta-learner methods, with 80\%+ improved AUCC (area under cost curve \cite{du2019improve}). To the best of our knowledge, our work is the first RL solution for session-level dynamic ad load optimization. 

\item We further develop a novel offline robust dueling DQN approach aiming to alleviate potential distribution shift issues in session-level dynamic ad load optimization since robust RL optimizes a well-performing policy that is robust against model mismatch. We first demonstrate the proposed approach has more robust behaviors with slow cumulative reward decay as perturbations increase in CartPole-v1 and LunarLander-v2 public data (similar properties as production data). Additionally, on session-level production data, our preliminary experiments show that the proposed robust dueling DQN improves test AUCC by ~5\% and ~25\% compared with non-robust dueling DQN and meta-learner.

\item Our framework has been deployed to multiple production systems and achieved outsized topline business gains. From the post-launch A/B test on live traffic, we observe, on average, a double-digit improvement in engagement-ad score trade-off efficiency (e.g., a double-digit reduction in engagement loss when increasing ad score to the same level or vice versa). It shows that the proposed framework has significantly improved our platform's capability to serve both consumers and advertisers effectively.
\end{itemize}

\section{Preliminaries and Related Works}
A Markov decision process (MDP) is represented by a tuple \\
$(\Sc, \Ac, P, r, \gamma, \rho)$, where $\Sc$ is the state space, $\Ac$ is the action space, $P: \Sc \times \Ac \to \Delta(\Sc)$ is the transition kernel, $r: \Sc \times \Ac \to [0, 1]$ is the reward function, $\gamma \in [0, 1)$ is the discount factor, and $\rho \in \Delta(\Sc)$ is the initial state distribution. Note that $\Delta(\mathcal{X})$ represents a $(|\mathcal{X}|-1)$-dimensional probability simplex, where $\mathcal{X}$ could be $\mathcal{S}$ or $\mathcal{A}$.
Given any stationary policy $\pi: \Sc \to \Delta(\Ac)$, its value function is $V_{P}^{\pi}(s) := \Eb_{P, \pi}\left[\sum_{t=0}^{\infty} \gamma^t r(s_t, a_t) | s_0 = s\right], \forall s \in \Sc$. With a slight abuse of notation, we denote $V_{P}^{\pi}(\rho) := \Eb_{s \sim \rho}[V_{P}^{\pi}(s)]$. We can similarly define the state-action value function as $Q_{P}^{\pi}(s, a) := \Eb_{P, \pi}\left[\sum_{t=0}^{\infty} \gamma^t r(s_t, a_t) | s_0 = s, a_0 = a\right]$, the advantage function as \\ $A_{P}^{\pi}(s, a) := Q_{P}^{\pi}(s, a) - V_{P}^{\pi}(s)$, and the state-action visitation distribution as $d_{\rho}^{\pi, P}(s, a) := (1 - \gamma) \Eb_{P, \pi} [\sum_{t=0}^{\infty} \gamma^t \mathbbm{1}(s_t = s, a_t=a)|s_0 \sim \rho]$. When the context is clear, we represent the state visitation distribution as $d_{\rho}^{\pi, P}(s) := (1 - \gamma) \Eb_{P, \pi} \left[\sum_{t=0}^{\infty} \gamma^t \mathbbm{1}(s_t = s)|s_0 \sim \rho\right]$, which is the marginal distribution of the state-action visitation, i.e., $d_{\rho}^{\pi, P}(s) = \sum_{a \in \Ac} d_{\rho}^{\pi, P}(s, a)$.

In the remainder of this section, we demonstrate more preliminaries and related works about offline RL (Section \ref{subsec:offline-rl}), robust RL (Section \ref{subsec:robust-rl}), area under cost curve metric (Section \ref{subsec:auuc}), and ad allocation (Section \ref{subsec:ad-allo}).

\subsection{Offline Reinforcement Learning}
\label{subsec:offline-rl}
Instead of interacting with the environment and generating additional transitions using the behavior policy $\pi_{\beta}$, offline RL learns the optimal policy of an MDP with a \textit{static} dataset $\Dc_{P} = \left\{\left(s_i, a_i, r_i, s_i'\right)\right\}_{i=1}^N$, where $(s_i, a_i) \sim d_{\rho}^{\pi_{\beta}, P}(\cdot, \cdot)$ and $s_i' \sim P(\cdot|s_i, a_i)$. In principle, most off-policy RL algorithms could be used as offline RL approaches by using an offline dataset $\Dc_{P}$ to prefill the data buffer without additional online exploration \cite{levine2020offline}. 

The major challenge of offline RL is the distribution shift between the state-action visitation distribution of the behavior policy and that of the learned policy, which could lead to severe overestimation for the out-of-distribution (OOD) state-action pairs \cite{fujimoto2019off, kumar2019stabilizing}. To alleviate this issue, a series of model-free offline RL algorithms have been proposed, including policy constraint approaches (i.e., constraining the learned policy to lie close to the behavior policy or its support) \cite{fujimoto2019off, kumar2019stabilizing} and uncertainty-based approaches (i.e., penalizing the Q-values of OOD actions) \cite{kumar2020conservative, kostrikov2021offline}. Additionally, another stream of works is based on model-based offline RL algorithms \cite{kidambi2020morel, yu2020mopo}, which modify the MDP transition learned from data to induce conservative behavior.

\subsection{Robust Reinforcement Learning}
\label{subsec:robust-rl}
The robust Markov decision process (RMDP) formulation differs from the standard MDP in that it takes into account a set of transitions instead of a single transition. We denote RMDP as a tuple $(\Sc, \Ac, \Pc, r, \gamma, \rho)$, where $\Pc$ is a set of transitions known as the \textit{uncertainty set} that is typically defined as
\begin{align}
\label{eqn:uncer-set}
    \Pc = \bigotimes_{s, a} \Pc_{s, a}, \ \Pc_{s, a} = \left\{P_{s, a} \in \Delta(\Sc): d(P_{s, a}, P_{s, a}^0) \leq \delta \right\}.
\end{align}
The uncertainty set $\Pc$ (Equation (\ref{eqn:uncer-set})) follows a key $(s, a)$-rectangularity condition that is commonly assumed since the introduction of RMDPs \cite{iyengar2005robust, nilim2005robust}. Additionally, $P_{s, a}^0$ is the nominal stationary transition of the training environment, $d(\cdot, \cdot)$ is some divergence metric between probability distributions, and $\delta > 0$ is the radius to control the level of perturbations around the nominal transition. 

The robust value function is defined as \cite{iyengar2005robust, nilim2005robust}
\begin{align}
    V_{\Pc}^{\pi}(s) := \inf_{P \in \Pc} V_{P}^{\pi}(s).
\end{align}
The goal of RMDP is to learn an optimal robust policy that achieves the optimal worst-case performance over all possible transitions in the uncertainty set, i.e., 
\begin{align}
    \sup_{\pi} V_{\Pc}^{\pi}(\rho) = \sup_{\pi} \inf_{P \in \Pc} \Eb_{s \sim \rho} [V_{P}^{\pi}(s)].
\end{align}

The corresponding robust Bellman operator $\Tc_{\Pc}: \Rb^{\Sc \times \Ac} \to \Rb^{\Sc \times \Ac}$ is
\begin{align}
    (\Tc_{\Pc} Q)(s, a) = r(s, a) + \gamma \inf_{P \in \Pc_{s, a}} \Eb_{s' \sim P_{s, a}} \max_b Q(s', b).
\end{align}
Since $\Tc_{\Pc}$ is a contraction mapping in the infinity norm \cite{iyengar2005robust}, the Bellman optimality equation for RMDPs is $Q^* = \Tc_{\Pc} Q^*$, where $Q^*$ is its unique optimal solution according to the Banach fixed-point theorem.

\subsection{Area Under Cost Curve Metric}
\label{subsec:auuc}
To evaluate return on investment (ROI), current production adopts the area under cost curve (AUCC) metric \cite{du2019improve}, which is a variant of the area under uplift curve (AUUC) metric \cite{rzepakowski2010decision}.
AUUC is a common metric to measure heterogeneous treatment effect in uplift modeling, which is obtained by ranking individuals in order to choose the top most responsive individuals \cite{gutierrez2017causal}. Traditional AUUC adopts population as the x-axis and one-dimensional treatment effects as the y-axis. However, there are two-dimensional treatment effects in session-level dynamic ad load optimization, i.e., ad monetization and user engagement, which generally show correlated trade-offs, e.g., increasing ad load could potentially bring more ad monetization meanwhile hurting user engagement due to the reduction of organic content. Therefore, we utilize AUCC to evaluate two-dimensional treatment effects with trade-offs to reflect ROI, where normalized engagement loss is used as the x-axis and normalized monetization gain is adopted as the y-axis. The AUCC value represents how well we distinguish the sessions based on their sensitivity to ad load change, which could be utilized for different business needs, e.g., increasing ad load to the sessions that are the most sensitive to monetization increase while the least sensitive to user engagement change. 

Although RL algorithms focus on long-term objectives, it is still reasonable to measure them under the AUCC metric. The main idea is to treat RL models as a prediction module to estimate the long-term reward on monetization and engagement, and decouple it from the decision module which focuses on making ad load decisions to different sessions based on the output from prediction layers. On the other hand, when choosing the discount factor as 0, RL is reduced to a neural contextual bandit, which also shares a similar architecture as the treatment-agnostic representation network in causal learning literature \cite{shalit2017estimating}.

\subsection{Ad Allocation}
\label{subsec:ad-allo}
There is a series of literature on related ad allocation problems in social networking and e-commerce applications in industry \cite{yan2020ads, carrion2021blending, liao2022cross, liao2022deep, wang2022hybrid, sagtani2024ad}, focusing mainly on how to place a fixed number of ads and organic content locations. Several approaches have been proposed to find optimal positions via constrained optimization problem \cite{yan2020ads, chen2022hierarchically}, multi-objective optimization \cite{carrion2021blending}, and end-to-end RL \cite{liao2022cross, liao2022deep, wang2022hybrid, rafieian2023optimizing}. 

However, our ad load optimization problem distinguishes from ad allocation from three perspectives. Firstly, ad load optimization is built on top of the existing mechanism to make further personalization and optimization, e.g., the highest position of the first ad or the minimum distance between two consecutive ads. As a comparison, Yan et al. \cite{yan2020ads} adopt a fixed highest position and minimum distance as a fixed rule and some others do not consider ad load directly. 
Secondly, many existing works on ad allocation seem to have overlooked the strong carry-over effect, and are therefore incapable of capturing how historical treatment affects future observations, including state representations and treatment outcomes.
Thirdly, these works also cannot tackle the dynamics of user behavior, making them less robust to distribution drift without frequent retraining as user behavioral patterns change over time.

\section{Problem Formulation and Methods}
In this section, we start by introducing how to formulate the session-level dynamic ad load optimization problem into a robust MDP (Section \ref{subsec:prob-form}). We then demonstrate the details of the proposed offline robust dueling DQN approach and its related theoretical guarantees under function approximation in Section \ref{subsec:method}.

\subsection{Problem Formulation}
\label{subsec:prob-form}

The session-level dynamic ad load optimization problem is formulated as a robust MDP $(\Sc, \Ac, \Pc, r, \gamma)$, where the elements are defined as follows.
\begin{itemize}
    \item \textbf{State space $\Sc$.} A state $s \in \Sc$ consists of user-level features (e.g., age, tenure, country, etc.), session-level features (e.g., user behavior within the last X hours, time bucket of the day, time since the last session, etc.), and actions in previous sessions. For preprocessing, we apply one-hot encoding for categorical states and standardization for continuous states. 
    \item \textbf{Action space $\Ac$.} An action $a \in \Ac$ is the decision of ad position on the current session. In our scenario, we have a discrete action space that includes two types of actions: low ad load assignment and high ad load assignment.
    \item \textbf{Uncertainty set $\Pc$.} 
    User behaviors will shift over time, which means transitions will also change temporally. Therefore, we consider a set of transitions instead of a single transition. We will specify the choice of uncertainty set $\Pc$ in Section \ref{subsec:method}. Note that we have $\delta=0$ for non-robust MDPs and the corresponding uncertainty set is reduced to a singleton, i.e., $\Pc_{s, a} = \{P^0(s, a)\}$.
    \item \textbf{Reward $r$.} After the agent takes an action in one state, some ad monetization $r^{rev}$ and user engagement $r^{eng}$ signals can be received as feedback. In this paper, we calculate the reward function based on the linear scalarization between normalized $r^{rev}$ and normalized $r^{eng}$, i.e., 
    \begin{align}
        r(s, a) = r^{rev}(s, a) + \alpha r^{eng}(s, a), 
    \end{align}
    where $\alpha > 0$ is a predefined weight. We leave the discussion of several nonlinear scalarizations and linear scalarizations with unknown weights in Section \ref{sec:concl}.
    \item \textbf{Discounted factor $\gamma$.} The discount factor $\gamma \in [0, 1)$ strikes a balance between the short-term and long-term rewards.
\end{itemize}

Note that the above RL formulation targets the prediction module where we solve $\sup_{\pi} V_{\Pc}^{\pi}(\rho)$ based on the offline dataset $\Dc_{P^0} = \left\{\left(s_i, a_i, r_i, s_i'\right)\right\}_{i=1}^N$ as the solution. The decision module is designed by ranking users $\Delta^{rev} + \alpha \Delta^{eng}$ to select suitable sessions to adjust the ad-load strategy, which satisfies our business goal of maximizing the monetization gain and minimizing the engagement loss.

\subsection{Methodology}
\label{subsec:method}
Since we first want to verify whether RL can become a feasible solution to the session-level dynamic ad load optimization problem, we conduct experiments on offline datasets with the same training and test distributions. In this scenario without distribution shifts, the classic offline deep Q-network (DQN) \cite{mnih2015human} is sufficient. Compared with online DQN, offline DQN utilize $\Dc_{P^0} = \left\{\left(s_i, a_i, r_i, s_i'\right)\right\}_{i=1}^N$ to pre-populate the replay buffer and sample a batch $B \subset \Dc_{P^0}$ to update parameters of Q-networks via the loss function
\begin{align}
    L_1(B, \theta) = \frac{1}{|B|} \sum_{i=1}^{|B|} \left(r_i + \gamma \max_{a_i'} Q_{\theta^{target}}(s_i', a_i') - Q_{\theta}(s_i, a_i)\right)^2,
\end{align}
where $\theta^{target}$ only updates every $C$ step ($C \in \mathbb{N}^+$) and are held fixed between individual updates.

Before conducting online A/B testing, the current pipeline will evaluate the model performance on different rounds (weeks) of data with different distributions, which can be regarded as a proxy of dynamic online environments. However, due to the performance degradation caused by distribution shifts and the lack of robustness of existing methods, current algorithms may need to be retrained in a relatively short period, which is expensive and laborious. Therefore, we are eager to employ algorithms with more robust performance against distribution shifts in production. 

If we have a high-fidelity simulator to evaluate as in \cite{fu2020d4rl}, then classical offline RL algorithms (e.g., batch constrained Q-learning (BCQ) \cite{fujimoto2019off} and conservative Q-learning (CQL) \cite{kumar2020conservative}) will be good candidates to alleviate distribution shifts. However, since online A/B testing is expensive, the newly proposed approach has to be first evaluated by the AUCC metric over different weeks of offline data to reflect ROI, where OOD data are inevitable. AUCC metric requires the information of trained Q-values instead of learned policy, which is different from classic offline RL with a simulator for evaluation \cite{fu2020d4rl} and makes BCQ and CQL perform badly (details in Section \ref{subsubsec:-data}). Therefore, we propose a new offline robust dueling DQN approach to alleviate distribution shifts and fit into the AUCC metric.

Since session-level dynamic ad load optimization has continuous features, a computationally feasible RMDP algorithm is required in a large state space. Unfortunately, most tabular RMDP algorithms based on R-contamination uncertainty set \cite{wang2021online} and $l_p$-norm uncertainty set \cite{kumar2022efficient} are computationally infeasible. This is because they require calculating the minimum of the value functions over the entire state space (R-contamination) or mean, median, average peak of value functions depending on the choice of $p$ ($l_p$-norm). While Wasserstein distance uncertainty set \cite{kuang2022learning} and $f$-divergence uncertainty set \cite{panaganti2022robust} are feasible for large-scale RMDP, they either suffer from a lack of theoretical guarantees or long training time with instabilities by calculating optimal dual variable for each state-action pair. Zhou et al. \cite{zhou2023natural} propose two new uncertainty sets, i.e., double sampling (DS) uncertainty set and integral probability metric (IPM) uncertainty set, to make large-scale online RMDP computationally tractable with theoretical convergence guarantees and superior empirical performance. Although DS uncertainty sets are not applicable in offline RL due to their dependence on the existence of a simulator in training, IPM uncertainty sets have the potential to be applied to offline RL.

Specifically, given some function class $\Fc$, IPM is defined as $d_{\Fc}(p, q) := \sup_{f \in \Fc} \{p^T f - q^T f\} \geq 0$ \cite{muller1997integral} for any two probability distributions $p$ and $q$. Then its corresponding uncertainty set is $\Pc$ with
\begin{align}
    \Pc_{s, a} = \{q: d_{\Fc}(q, P_{s, a}^0) \leq \delta, \ \sum_{s \in \Sc} q(s)=1\},
\end{align}
where $P_{s, a}^0(s_{t+1} | s_t, a_t)$ is defined as the state transition probability within the offline training dataset and $t$ is the index for the session. Following \cite{zhou2023natural, kumar2022efficient}, we relax the domain $q \in \Delta(\Sc)$ to $\sum_{s \in \Sc} q(s) = 1$, which provokes no relaxation error for small $\delta$ if $\min_{s'} P_{s, a}^0(s') > 0$. 

In the remaining section, we first derive the closed-form formula of the empirical robust Bellman operator with the IPM uncertainty set under linear function approximation and then generalize it to general function approximation.

\subsubsection{Linear Function Approximation.}
Due to the property of large state space, function approximation of some value functions is required. We start by considering linear function approximation for $\max Q =: V$ and $Q$ separately, i.e., $V_w = \Phi w$ and $Q_{\theta} = \Psi \theta$, where $\Phi \in \Rb^{|\Sc| \times d}$ and $\Psi \in \Rb^{|\Sc||\Ac| \times d}$ are feature matrix with rows $\phi^T(s)$ and $\psi^T(s)$. The corresponding linear function class for $\max Q$ (i.e., $V$) is
\begin{align}
\label{eqn:func-class}
    \Fc := \left\{s \mapsto \phi(s)^T w: w \in \Rb^d, \|w\| \leq 1, \phi(s)^T w \in [0, \frac{1}{1-\gamma}]\right\}.
\end{align}
Without loss of generality, we assume $\Phi$ has full column rank given $d << |\Sc|$ and the first coordinate of $\phi(s)$ be 1 for any $s$, representing the linear regressor's bias term. Then we will have Proposition \ref{prop:ipm} similar to \cite{zhou2023natural} but different definition of $V_w$.
\begin{proposition}
\label{prop:ipm}
    For the IPM uncertainty set with $\Fc$ in (\ref{eqn:func-class}), we have $\inf_{P \in \Pc_{s, a}} P^T V_w = (P_{s, a}^0)^T V_w - \delta \|w_{2:d}\|$.
\end{proposition}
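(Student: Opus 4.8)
The plan is to reduce the constrained optimization $\inf_{P\in\Pc_{s,a}}P^T V_w$ to a Euclidean problem in the coefficient space $\Rb^d$ and then solve it by Cauchy--Schwarz. I write any candidate transition as $P = P_{s,a}^0 + g$, where the perturbation $g\in\Rb^{|\Sc|}$ satisfies $\mathbf{1}^T g = 0$ because both $P$ and $P_{s,a}^0$ sum to one. Then $P^T V_w = (P_{s,a}^0)^T V_w + g^T\Phi w$, so it suffices to minimize the linear term $g^T\Phi w = (\Phi^T g)^T w$ over the feasible set. Setting $z := \Phi^T g\in\Rb^d$, the first structural observation is that the leading (``bias'') coordinate of $z$ vanishes: since the first column of $\Phi$ is the all-ones vector $\mathbf{1}$ (recall the wlog normalization that the first entry of every $\phi(s)$ equals $1$), we get $z_1 = \mathbf{1}^T g = 0$. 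Consequently $(\Phi^T g)^T w = z_{2:d}^T w_{2:d}$ depends on $w$ only through $w_{2:d}$, which already explains why the final bound involves $\|w_{2:d}\|$ rather than $\|w\|$.

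The second step is to rewrite the IPM constraint $d_{\Fc}(P, P_{s,a}^0)\le\delta$ purely in terms of $z_{2:d}$. For any test function $f = \Phi u\in\Fc$ we have $g^T f = z^T u = z_{2:d}^T u_{2:d}$ (again using $z_1=0$), so Cauchy--Schwarz together with $\|u\|\le 1$ gives $g^T f\le\|z_{2:d}\|\,\|u_{2:d}\|\le\|z_{2:d}\|$, hence $d_{\Fc}(P,P_{s,a}^0)\le\|z_{2:d}\|$. For the matching lower bound I would exhibit a near-extremal $f\in\Fc$ whose slope coefficients $u_{2:d}$ point along $z_{2:d}/\|z_{2:d}\|$, choosing the free bias coordinate $u_1$ to shift $f$ into the admissible range $[0,\tfrac{1}{1-\gamma}]$; because $g\perp\mathbf{1}$, this shift does not change $g^T f$, so the supremum reaches $\|z_{2:d}\|$. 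Combining both directions, the feasible set is exactly $\{g: \mathbf{1}^T g = 0,\ \|(\Phi^T g)_{2:d}\|\le\delta\}$, up to the non-negativity of $P$ handled below.

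Given this characterization, the final step is immediate: minimizing $z_{2:d}^T w_{2:d}$ subject to $\|z_{2:d}\|\le\delta$ yields, by Cauchy--Schwarz, the value $-\delta\|w_{2:d}\|$, attained at $z_{2:d} = -\delta\,w_{2:d}/\|w_{2:d}\|$. It remains to confirm that this optimal $z$ is realized by an admissible transition: since $\Phi$ has full column rank, $\Phi^T$ is surjective onto $\Rb^d$, so some $g$ with $\Phi^T g$ equal to the target (and thus $\mathbf{1}^T g = 0$, as the target's first coordinate is $0$) exists; and for $\delta$ small enough, using $\min_{s'}P_{s,a}^0(s')>0$, the vector $P = P_{s,a}^0 + g$ stays a valid distribution. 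This is precisely the regime in which the earlier relaxation from $q\in\Delta(\Sc)$ to $\sum_s q(s)=1$ incurs no error. Assembling the pieces gives $\inf_{P\in\Pc_{s,a}}P^T V_w = (P_{s,a}^0)^T V_w - \delta\|w_{2:d}\|$.

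I expect the main obstacle to be the lower-bound half of the IPM characterization, namely showing that the supremum defining $d_{\Fc}$ genuinely attains $\|z_{2:d}\|$ while honoring both the unit-norm constraint $\|u\|\le 1$ and the range constraint $\phi(s)^T u\in[0,\tfrac{1}{1-\gamma}]$ encoded in $\Fc$. The bias coordinate is what makes this work, since it lets us translate the extremal test function into the admissible box ``for free'' in the IPM value; but making this rigorous is where the normalization assumptions (leading feature coordinate $1$, full column rank) and the small-$\delta$ positivity condition are essential, and I would treat it with care rather than as a routine calculation.
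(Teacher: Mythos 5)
Your overall skeleton---writing $P = P_{s,a}^0 + g$ with $\mathbf{1}^T g = 0$, observing that $(\Phi^T g)_1 = \mathbf{1}^T g = 0$ because of the all-ones bias column, reducing the problem to minimizing $z_{2:d}^T w_{2:d}$ where $z := \Phi^T g$, and using surjectivity of $\Phi^T$ plus Cauchy--Schwarz for attainment---is exactly the argument the paper inherits from \cite{zhou2023natural}. But the step you yourself flag as the main obstacle is a genuine gap, and your proposed resolution of it does not work. To prove the direction $\inf_{P \in \Pc_{s,a}} P^T V_w \geq (P_{s,a}^0)^T V_w - \delta \|w_{2:d}\|$ you need the implication ``$d_{\Fc}(P, P_{s,a}^0) \leq \delta \Rightarrow \|z_{2:d}\| \leq \delta$'', i.e., $d_{\Fc} \geq \|z_{2:d}\|$; your Cauchy--Schwarz bound only gives the opposite containment (which yields the ``$\leq$'' half). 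Your trick---take $u_{2:d} = z_{2:d}/\|z_{2:d}\|$ and ``shift the bias coordinate for free''---fails because the shift is free only for the \emph{value} $g^T f$; it is not free for the \emph{constraint} $\|u\| \leq 1$. Once $\|u_{2:d}\| = 1$, any nonzero bias $u_1$ gives $\|u\|^2 = u_1^2 + 1 > 1$, so the extremal test function you need is generally not in $\Fc$.

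Moreover, this is not a repairable technicality: with the range constraint kept inside the test class (\ref{eqn:func-class}), the claimed equality can genuinely fail. Take two states with $\phi(s_1) = (1, 0)$, $\phi(s_2) = (1, 1)$ and $g = (t, -t)$ with $t > 0$, so $z = (0, -t)$. The constraints $\phi(s_1)^T u = u_1 \geq 0$ and $\phi(s_2)^T u = u_1 + u_2 \geq 0$ together with $\|u\| \leq 1$ force $u_2 \geq -1/\sqrt{2}$, hence $\sup_{f \in \Fc} g^T f = t/\sqrt{2}$. Thus the perturbation with $t = \sqrt{2}\,\delta$ is feasible even though $\|z_{2:d}\| = \sqrt{2}\,\delta > \delta$, and for $w = (0, 1/2)$ (a legitimate member of $\Fc$) the infimum equals $(P_{s,a}^0)^T V_w - \sqrt{2}\,\delta/2$, strictly below the claimed value $(P_{s,a}^0)^T V_w - \delta/2$. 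The identity is exact precisely when the IPM test class is the plain norm ball $\{\phi(\cdot)^T u : \|u\| \leq 1\}$, the setting in which the cited argument of \cite{zhou2023natural} goes through: there $d_{\Fc}(P, P_{s,a}^0) = \|\Phi^T g\| = \|z_{2:d}\|$ holds with equality and both directions are immediate. The interval constraint $\phi(s)^T u \in [0, \frac{1}{1-\gamma}]$ should be a property of the value functions being backed up, not of the test functions defining the metric. So the correct fix is to drop the range constraint from the test class (or restate the proposition for that class), after which your remaining steps---which are all correct---complete the proof.
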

Proposition \ref{prop:ipm} illustrates that the robustness can be transferred to regularization without the bias parameter in the IPM uncertainty sets, which are computationally tractable for large state space and have almost no additional computational burden compared to the non-robust approach. Additionally, the empirical robust Bellman operator 
\begin{align}
\label{eqn:emp-bell}
(\hat{\Tc}_{\Pc} V_w)(s, a, s') := r(s, a) + \gamma V_{w}(s') - \gamma \delta \|w_{2:d}\|
\end{align}
is an unbiased estimate for the robust Bellman operator, and the contraction behavior of $\Tc_{\Pc}$ for IPM uncertainty set can be established in Proposition \ref{prop:contract} similar to \cite{zhou2023natural}.
\begin{proposition}
\label{prop:contract}
    For IPM uncertainty set with radius $\delta < \\
    \lambda_{min}\left(\Phi^T \diag(\nu) \Phi\right) \frac{1-\gamma}{\gamma}$, there exists $\beta < 1$ that $\Tc_{\Pc}$ is a $\beta$-contraction mapping w.r.t. norm $\|\cdot\|_{\nu}$, where $\nu$ is any state-action distribution and $\lambda_{min}$ represents the minimum eigenvalue.
\end{proposition}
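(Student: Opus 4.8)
The plan is to start from the closed-form empirical robust Bellman operator furnished by Proposition~\ref{prop:ipm} and reduce the contraction claim to controlling two separate contributions. Fix two state-action value functions $Q_1, Q_2$ and write $V^1, V^2$ for the induced value functions $V^i(s') := \max_b Q_i(s', b) = \phi(s')^T w_i$ in the class $\Fc$. Applying Proposition~\ref{prop:ipm} termwise, the reward terms cancel and the difference of the two backups at any $(s,a)$ is
\begin{align}
(\Tc_{\Pc} Q_1 - \Tc_{\Pc} Q_2)(s,a) = \gamma\,(P^0_{s,a})^T(V^1 - V^2) - \gamma\delta\big(\|(w_1)_{2:d}\| - \|(w_2)_{2:d}\|\big). \nonumber
\end{align}
The first summand is the usual averaged-value term; the second is the robustness regularizer and is crucially constant in $(s,a)$. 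I would then split $\|\Tc_{\Pc} Q_1 - \Tc_{\Pc} Q_2\|_{\nu}$ by the triangle inequality into a ``transition part'' and a ``regularization part'' and show that their combined gain is strictly below one exactly when $\delta$ satisfies the stated bound.

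For the transition part I would use that $P^0_{s,a}$ is a probability vector, so $(P^0_{s,a})^T(\cdot)$ is an averaging, hence non-expansive, map; combined with Jensen's inequality and the pointwise estimate $|V^1 - V^2| \le \max_b |Q_1(\cdot,b) - Q_2(\cdot,b)|$, this absorbs the $\max$-operation and yields a leading coefficient equal to the discount factor $\gamma$.

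The main obstacle is the regularization part, since $\|w_{2:d}\|$ is a nonlinear function of the coefficient vector and cannot be treated as a linear operator. Here I would first apply the reverse triangle inequality, $\big|\,\|(w_1)_{2:d}\| - \|(w_2)_{2:d}\|\,\big| \le \|(w_1-w_2)_{2:d}\| \le \|w_1-w_2\|$, reducing the task to bounding the coefficient-space distance $\|w_1-w_2\|$, and then convert this back to a value-space quantity through the feature Gram matrix: since $\Phi$ has full column rank, $\Phi^T\diag(\nu)\Phi$ is positive definite and
\begin{align}
\|V^1 - V^2\|_{\nu}^2 = (w_1-w_2)^T \big(\Phi^T\diag(\nu)\Phi\big)(w_1-w_2) \ge \lambda_{min}\big(\Phi^T\diag(\nu)\Phi\big)\,\|w_1-w_2\|^2. \nonumber
\end{align}
Thus $\|w_1-w_2\|$ is controlled by $\|V^1-V^2\|_{\nu}$ through $\lambda_{min}(\Phi^T\diag(\nu)\Phi)$, and the constant-in-$(s,a)$ structure noted above means the regularizer contributes a factor proportional to $\gamma\delta$ times this converted distance. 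Excluding the bias coordinate keeps this step consistent, since the penalty in Proposition~\ref{prop:ipm} ignores the direction $\phi_1\equiv 1$ lying in the kernel of the IPM.

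Combining the two parts gives a contraction coefficient of the form $\beta = \gamma + \gamma\delta/\lambda_{min}(\Phi^T\diag(\nu)\Phi)$, up to the precise normalization of the Gram-matrix conversion, and the hypothesis $\delta < \lambda_{min}(\Phi^T\diag(\nu)\Phi)\frac{1-\gamma}{\gamma}$ is exactly the rearrangement that forces $\beta < 1$; assembling the bounds yields $\|\Tc_{\Pc} Q_1 - \Tc_{\Pc} Q_2\|_{\nu} \le \beta\|Q_1-Q_2\|_{\nu}$. The step I expect to be most delicate is making the transition part genuinely non-expansive in the fixed weighted norm $\|\cdot\|_{\nu}$ for an \emph{arbitrary} $\nu$ (rather than in $\ell_\infty$ or in the $P^0$-stationary norm), which I would handle by carefully tracking the push-forward of $\nu$ under $P^0$ and leaning on the same full-rank eigenvalue structure already needed for the regularization bound, mirroring the argument of \cite{zhou2023natural}.
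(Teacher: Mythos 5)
Your reconstruction follows the route the paper intends: the paper itself supplies no proof of Proposition~\ref{prop:contract} (it only asserts the result ``similar to \cite{zhou2023natural}''), and that argument is precisely your decomposition --- apply Proposition~\ref{prop:ipm} to write the difference of backups as a nominal-transition term plus an $(s,a)$-constant regularization term, bound the latter via the reverse triangle inequality and the Gram-matrix estimate $\|V^1-V^2\|_{\nu}^2 \geq \lambda_{min}\left(\Phi^T\diag(\nu)\Phi\right)\|w_1-w_2\|^2$, and rearrange the radius condition into $\beta<1$. Two points you hedge on should be made explicit rather than deferred. First, the Gram conversion gives $\|w_1-w_2\| \leq \|V^1-V^2\|_{\nu}/\sqrt{\lambda_{min}}$, so the threshold your argument naturally produces is $\delta < \sqrt{\lambda_{min}}\,(1-\gamma)/\gamma$; the stated threshold with $\lambda_{min}$ (no square root) is sufficient only when $\lambda_{min}\leq 1$, e.g.\ under feature normalization $\|\phi(s)\|\leq 1$. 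Second, the pointwise bound $|V^1-V^2|\leq \max_b|Q_1(\cdot,b)-Q_2(\cdot,b)|$ does not embed into a state-action weighted $L_2$ norm without an extra $1/\min_b\nu(b|s)$ factor; in the dueling-linear class you should instead use $\max_b Q_i(s,b)=V_{w_i}(s)$ directly and phrase the contraction in terms of $V_w$.

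The genuine gap is the step you yourself flag, and it is not merely delicate --- it is false as you would need it: the map $f\mapsto (P^0_{s,a})^Tf$ is \emph{not} non-expansive in $\|\cdot\|_{\nu}$ for an arbitrary state-action distribution $\nu$, and no tracking of push-forwards repairs this at the stated radius. Jensen only yields $\|P^0f\|_{\nu}^2 \leq \|f\|_{\nu P^0}^2$, i.e.\ control in the push-forward norm, and this folds back into $\|f\|_{\nu}$ exactly when $\nu$ is stationary for the nominal kernel, which is the setting of the cited analysis. Concretely, take two states and one action with deterministic transitions $1\to2$, $2\to2$, features $\phi(1)=(1,0)^T$, $\phi(2)=(1,1)^T$, $\nu=(1-\epsilon,\epsilon)$, and $V_{w_1}-V_{w_2}=(0,1)$ with $w_1-w_2=(0,1)^T$. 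Then $\lambda_{min}\left(\Phi^T\diag(\nu)\Phi\right)\approx\epsilon>0$, so the radius condition admits some $\delta>0$, yet the difference of backups equals $\gamma(1-\delta)$ at \emph{both} states, giving a ratio $\|\Tc_{\Pc}Q_1-\Tc_{\Pc}Q_2\|_{\nu}/\|Q_1-Q_2\|_{\nu}\approx \gamma(1-\delta)/\sqrt{\epsilon}$, which exceeds $1$ for small $\epsilon$. Your proposed patch --- converting between $\nu$ and $\nu P^0$ through the eigenvalue structure --- yields a factor $\sqrt{\lambda_{max}\left(\Phi^T\diag(\nu P^0)\Phi\right)/\lambda_{min}\left(\Phi^T\diag(\nu)\Phi\right)}$ multiplying $\gamma$, which is $\geq 1$ in general and destroys $\beta<1$ under the stated condition. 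The proof closes only if ``any state-action distribution'' is read as in \cite{zhou2023natural}, i.e.\ $\nu$ stationary under the nominal transition and the relevant policy (or satisfying a compatibility bound of the form $\nu P^0 \leq c\,\nu$ elementwise with $c\gamma<1$); under that hypothesis your Jensen step is valid and the remainder of your argument goes through.
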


In order to meet the requirement that $\max Q$ and $Q$ both belong to linear function classes, we adopt a dueling network architecture \cite{wang2016dueling} without a nonlinear activation function as shown in Figure \ref{fig:robust-ddqn}. We set
\begin{align}
    Q_{\theta}(s, a) = V_w(s) + A_u(s, a) - \max_b A_u(s, b), 
\end{align}
{where $A_u := \Gamma u, \Gamma \in \Rb^{|\Sc||\Ac| \times d}$.}
Then for $a^* = \argmax_a Q_{\theta}(s, a) = \argmax_b A_u(s, b)$, we have $Q_{\theta}(s, a^*) = V_w(s)$.
\begin{figure}[!htb]
    \centering
    \includegraphics[width=0.49\textwidth]{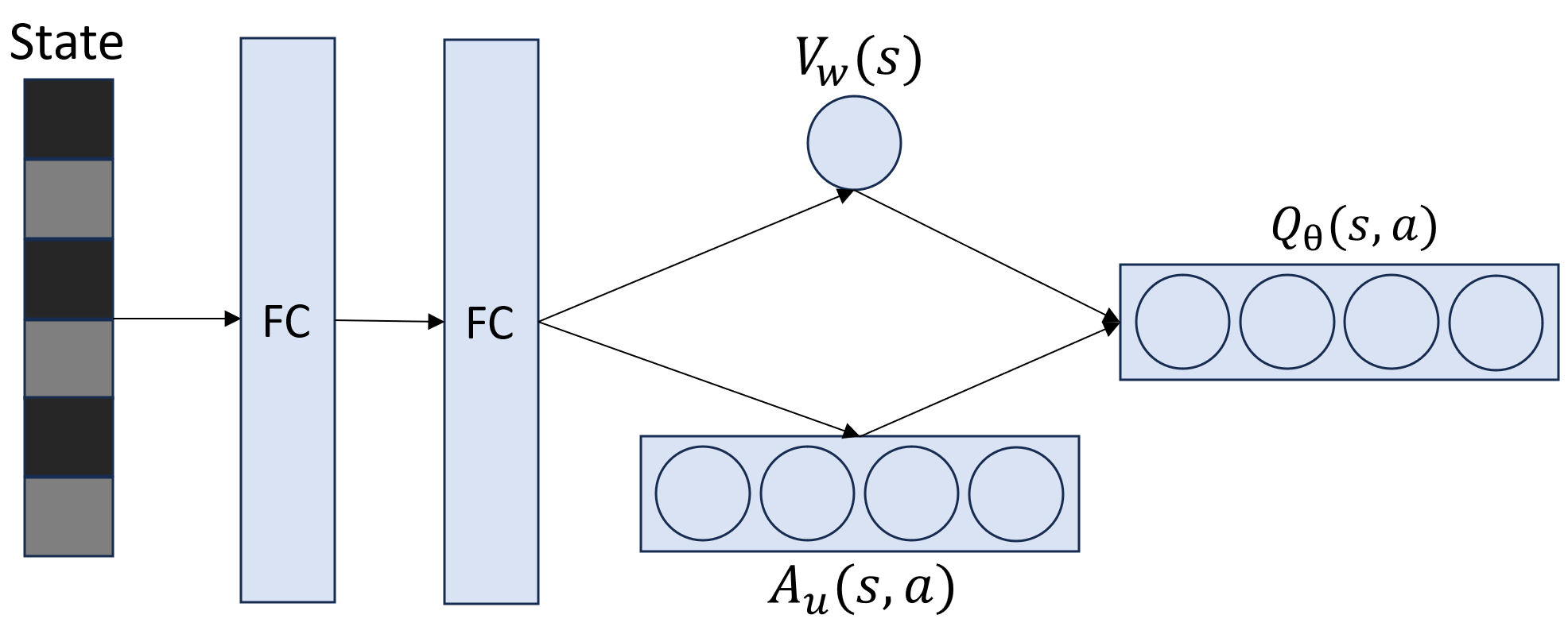}
    \caption{Structure of robust dueling DQN. Robustness is incorporated through the empirical robust Bellman operator (Equations (\ref{eqn:emp-bell}) and (\ref{eqn:general-emp-bell})).}
    \label{fig:robust-ddqn}
\end{figure}

\subsubsection{General Function Approximation.} For general function approximation, we adopt a dueling DQN as a backbone and optimize the loss function following the closed-form of empirical robust Bellman operator (Equation (\ref{eqn:emp-bell}))
\begin{align}
\label{eqn:general-emp-bell}
    L_2(B, \theta) = \frac{1}{|B|} \sum_{i=1}^{|B|} \left(r_i + \gamma V_{w^{target}}(s_i') - \gamma \delta \|w^{target}_{2:d}\| - Q_{\theta}(s_i, a_i)\right)^2.
\end{align}
There are two choices for the regularization term from different perspectives. i) Except for the last linear layer, all previous layers are adopted as a dynamic feature matrix, so we only regularize weights of the last linear layer. ii) Based on the Taylor series approximation, we utilize the initial gradient as a fixed feature matrix, so we regularize all parameters except for the bias parameter in the last layer. In general, the first choice performs better for online RL since the feature matrix has learned enough information due to exploration, while offline RL prefers the second choice since the learned feature matrix could be biased.

\section{Experimental Results}
In this section, we first illustrate data analysis and corresponding metrics of session-level dynamic ad load production data and OpenAI Gym public data (Section \ref{subsec:data-ana}). We then verify offline DQN can become a feasible and effective approach for session-level dynamic ad load optimization problems via production data from the same distribution (Section \ref{subsec:prod-same}). Next, we demonstrate the robust performance of the proposed robust dueling DQN against distribution shifts on both public data and session-level production data (Section \ref{subsec:data-diff}). Finally, we illustrate significant improvements in online deployment on the company's newsfeed platform (Section \ref{subsec:online}).

\subsection{Data Analysis and Metrics}
\label{subsec:data-ana}
We collect the session-level dynamic ad load production dataset by running a combination of a uniformly random policy and a rule-based policy on the company's newsfeed platform. To avoid feature bias introduced by the feedback loop of the ad load treatment, we adopt a Cookie-Cookie-Day (CCD) framework \cite{hohnhold2015focusing} to shuffle users every day in the experiment arm, where rectangles of different colors represent different users. A uniformly random policy is applied during the CCD period for treatment groups, while a rule-based policy is adopted during the non-CCD period.

To numerically analyze the confounding bias (e.g., X|T-shifts) and time-wise user behavior shifts (e.g., Y|X-shifts) in session-level data, we investigate the production data for both challenges, as shown in Figure \ref{fig:shift}. 
For confounding bias, we could clearly observe that the real-time signals (X) are biased under different previous treatments. With a high ad load in the previous session, users' recent consumption of organic posts decreases, and the time gap from the last interaction increases compared to a previous low ad load, both of which indicate a depression in user engagement due to the high ad load. Additionally, we note that treating previous actions as part of the current state has a positive impact on performance improvement in experiments, which implicitly illustrates the impact of confounding bias.
On the other hand, the user behavior against time also showed a clear trend of shifting within all the cohorts (Y|X shift). We select the session trigger cause as X which represents the fetch causes of the current triggered session, which is grouped for calculating the mean of user engagement (time spent within the session) as Y between morning and afternoon. 
{Specifically, \textit{auto}, \textit{back\_button}, \textit{cold\_start}, \textit{manual}, and \textit{warm\_start} represent automatically refreshing by the app, clicking the back button on Android, being inactive at the backend, manually refreshing the app, and being in an active state at the backend, respectively.}
Considering fetch causes are independent of time, it could be observed that there is a clear shift from morning to afternoon in user engagement (Y) for all the fetch cause cohorts. 

\begin{figure}
\begin{subfigure}{0.39\textwidth}
    \centering
    \includegraphics[width=\textwidth]{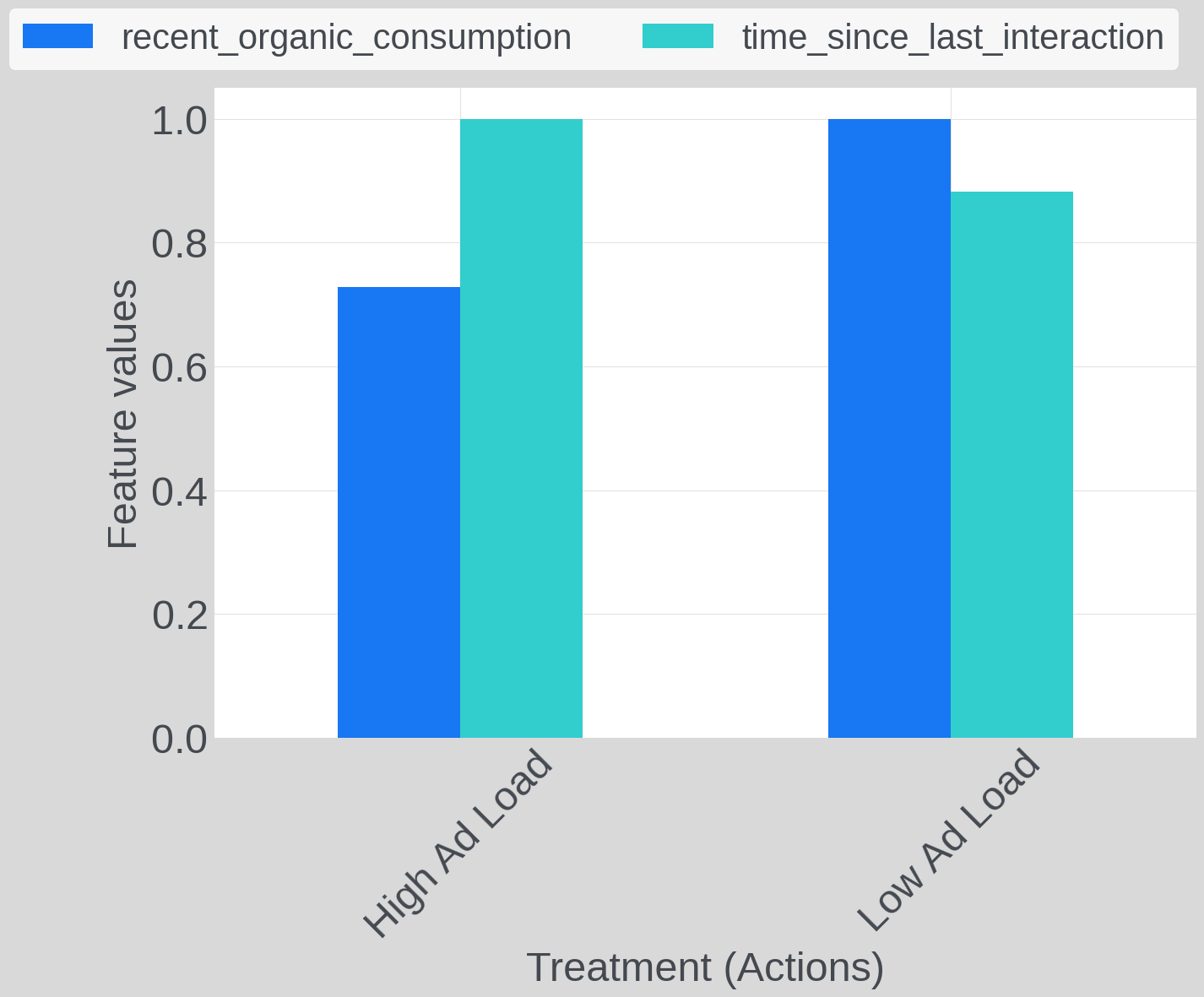}
    \caption{Mean values of different features (X) by different treatments (T)}
\end{subfigure}
\begin{subfigure}{0.39\textwidth}
    \centering
    \includegraphics[width=\textwidth]{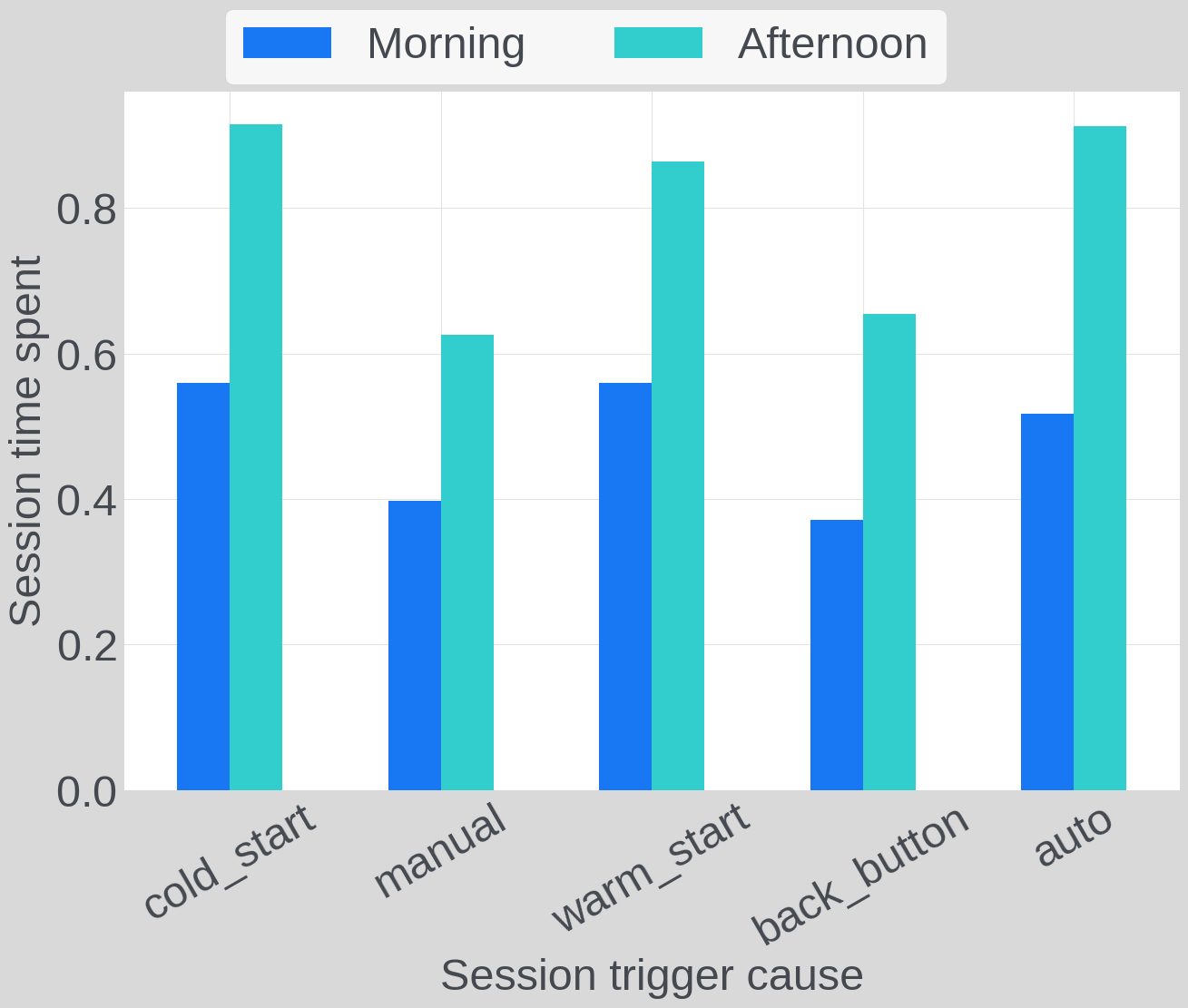}
    \caption{Mean value of engagement label (Y) versus time period by different trigger-cause (X) type.}
\end{subfigure}
\caption{Data analysis on confounding bias (X|T-shifts) and time-wise user behavior shift (Y|X-shifts)}
\label{fig:shift}
\end{figure}

For the session-level dataset, we measure algorithms with an AUCC metric to reflect ROI, where the x-axis is the normalized engagement loss and the y-axis is the normalized monetization gain. There are two common methods to rank individuals in the AUCC metric, i.e., sensitivity score and combined summation. The sensitivity score is defined as $-\Delta^{rev} / \Delta^{eng}$, while the combined summation is calculated as $\Delta^{rev} + \alpha \Delta^{eng}$, where $\alpha > 0$ and $\Delta$ measures the treatment effect on the monetization or engagement signal. Since DQN-based approaches are trained with a reward $r = r^{rev} + \alpha r^{eng}$, we will rank all algorithms based on combined summation in the decision module for fairness. 

In addition to production data, we also implement algorithms on the CartPole-v1 and LunarLander-v2 dataset \cite{brockman2016openai}, which are public data with similar properties as session-level production data (i.e., continuous state space and discrete action space). Due to the existence of simulators for the CartPole-v1 and LunarLander-v2 environments, we evaluate algorithms via cumulative rewards. Additionally, weight $\alpha = 1$ is chosen for production data, and radius $\delta=1\mathrm{e}{-4}$ is selected for all data.

\subsection{Production Data from the Same Distribution}
\label{subsec:prod-same}
The current baseline for session-level production data is meta-learner \cite{kunzel2019metalearners}, which suffers from confounding bias. In order to first verify the feasibility of the RL approach, we evaluate offline DQN and T-learner \footnote{The T-learner was selected as the baseline in the comparison with S-learner and X-learner due to its superior performance in our offline AUCC metrics and its suitability for our data characteristics. Our datasets exhibit strong treatment effect heterogeneity (i.e., variation in the effect of treatment across different individuals), which is not well-handled by S-learner. Additionally, the propensity score becomes unstable due to distribution shifts, which affects the effectiveness of the X-learner as it relies on propensity scores as a weighting function.} with XGBRegressor as a base learner on randomly split data across the whole production dataset. Specifically, 70\% data is used for training and the rest is used for testing. 

Since the average length of trajectory is around 5 in production data, we choose offline DQN with discount factor $\gamma=0.8$ as one of the candidates. The other candidate is offline DQN with $\gamma=0$, which is equivalent to neural contextual bandits and also shares a similar architecture as the treatment-agnostic representation network in causal learning literature \cite{shalit2017estimating}. 

As summarized in Table \ref{tab:same-dist-data}, offline DQN (1.1426 for $\gamma=0.8$) improves test AUCC by more than 80\% compared with the T-learner baseline (0.6049) on production data. Additionally, although AUCC is a short-term metric, the appropriate positive discount factor enjoys a better AUCC result, which may be due to avoiding some unusual edge cases that occur in a single session during training. Note that it is a possible phenomenon that the AUCC metric exceeds 1 since there are some counter-intuitive users who are more engaged after watching more advertisements.

\begin{table}
\caption{Summary of AUCC results for DQN and T-learner}
\centering
\begin{tabular}{lrr}
\toprule
Methods & Training AUCC & Test AUCC \\
\midrule
DQN, $\gamma=0.8$ & \textbf{1.2431} & \textbf{1.1426} \\
DQN, $\gamma=0.0$ & 1.0529 & 0.9570 \\
T-learner & 0.6214 & 0.6049 \\
% T-learner & 0.6219 & 0.5980 \\
\bottomrule
\end{tabular}
\label{tab:same-dist-data}
\end{table}

\begin{table}
\caption{Summary of AUCC results for different methods}
\centering
\begin{tabular}{lrr}
\toprule
Methods & Training AUCC & Test AUCC \\
\midrule
Robust dueling DQN & 0.7771 & \textbf{0.7290} \\
Dueling DQN & \textbf{0.7853} & 0.6962 \\
CQL & 0.6811 & 0.6601 \\
BCQ & 0.6718 & 0.6692 \\
T-learner & 0.6435 & 0.5750 \\
\bottomrule
\end{tabular}
\label{tab:diff-dist-data}
\end{table}

\subsection{Data with Distribution Shifts}
\label{subsec:data-diff}

\begin{figure*}[t]
\begin{subfigure}{0.33\textwidth}
    \centering
    \includegraphics[width=\textwidth]{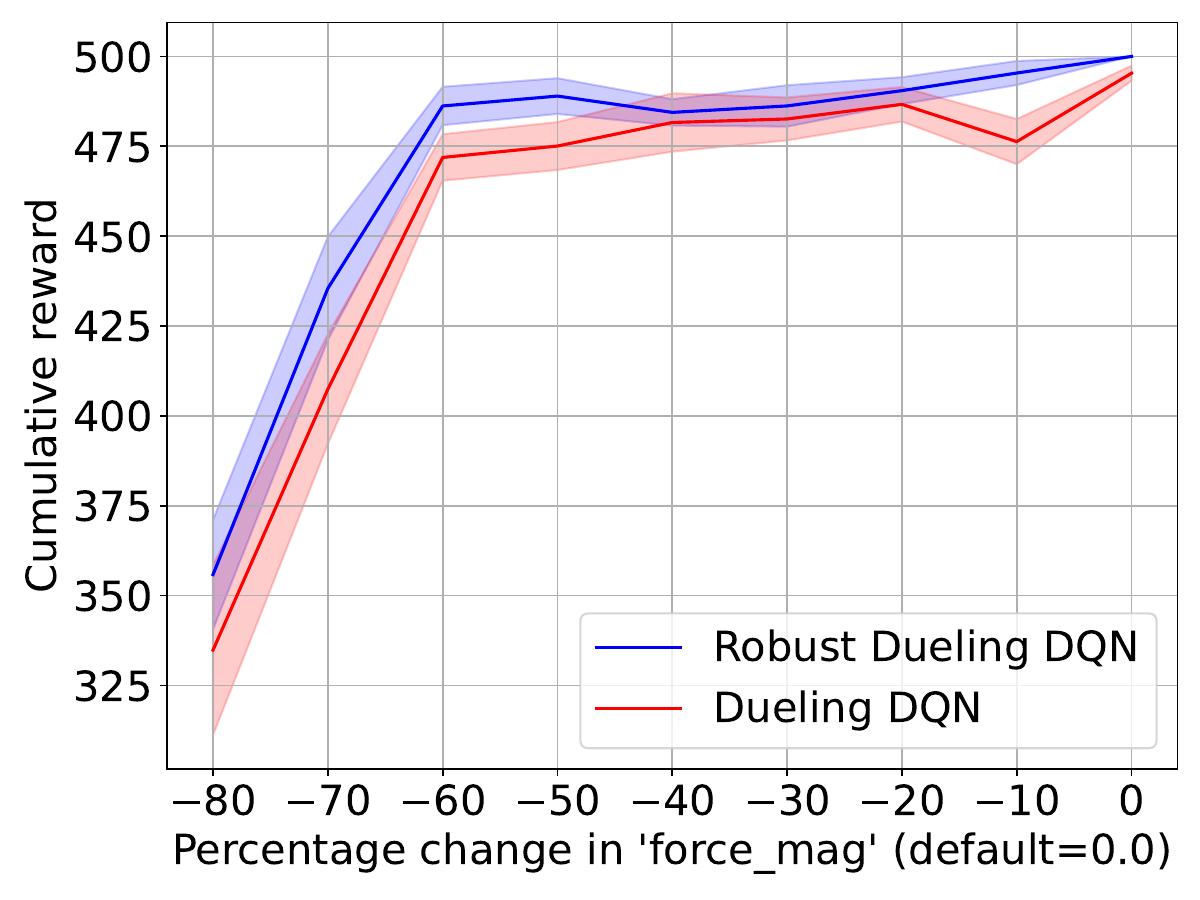}
    % \label{fig:offline-force-mag}
    \caption{CartPole-v1 data}
\end{subfigure}
\begin{subfigure}{0.33\textwidth}
    \centering
    \includegraphics[width=\textwidth]{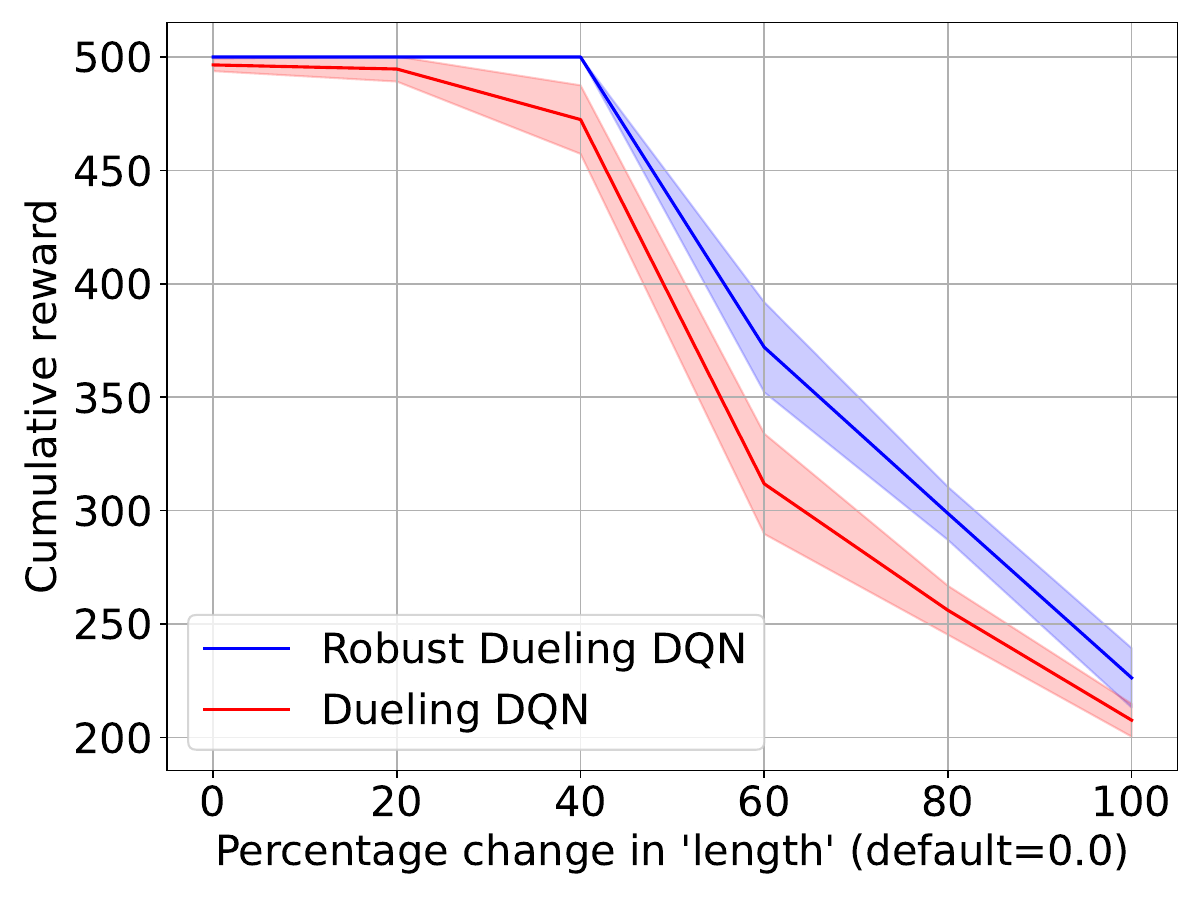}
    % \label{fig:offline-length}
    \caption{CartPole-v1 data}
\end{subfigure}
\begin{subfigure}{0.33\textwidth}
    \centering
    \includegraphics[width=\textwidth]{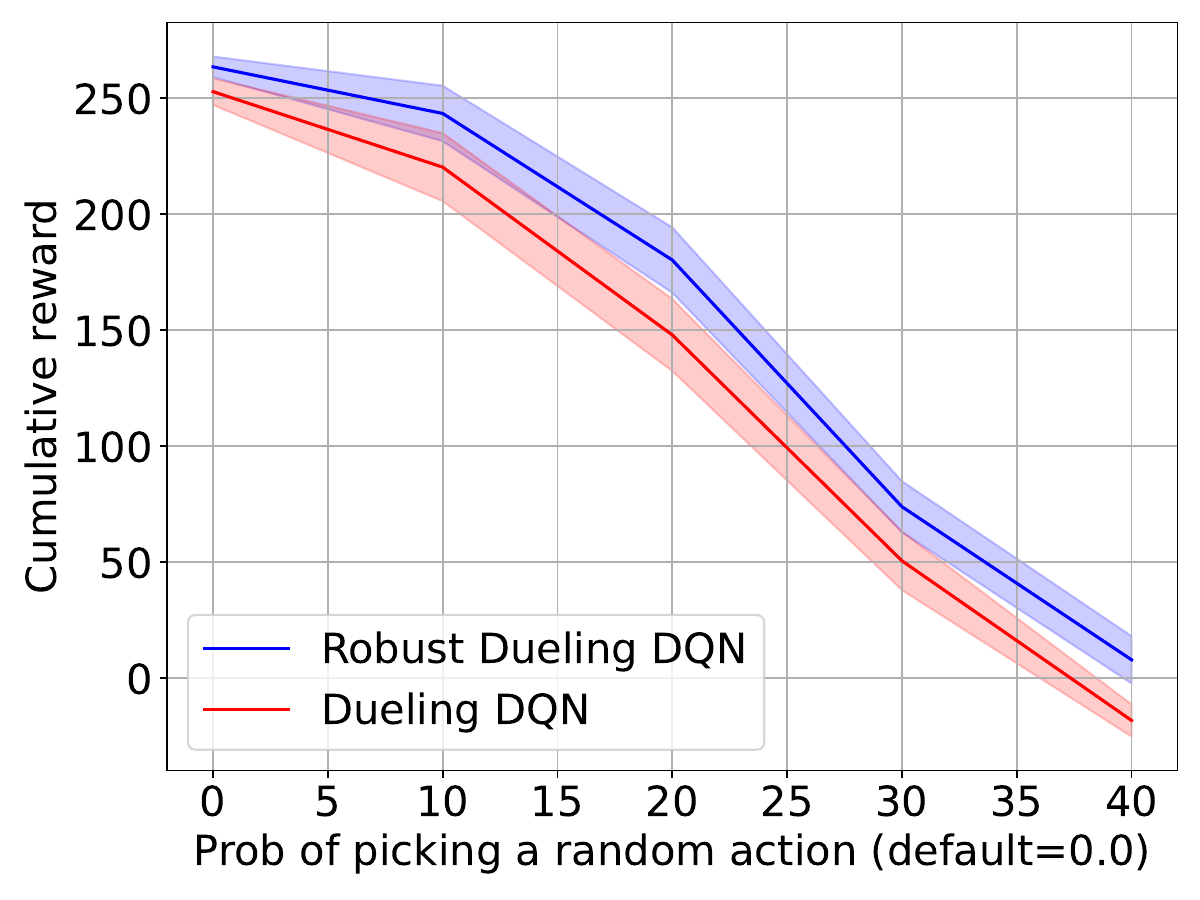}
    \caption{LunarLander-v2 data}
\end{subfigure}
\caption{Cumulative rewards of robust dueling DQN and dueling DQN under perturbation}
\label{fig:offline}
\end{figure*}

To verify the robust performance of the proposed robust dueling DQN, we evaluate several algorithms on both OpenAI Gym public data and session-level dynamic ad load production data. Note that these datasets are selected because they have key characteristics similar to our intended use case, i.e., session-level dynamic ad load optimization. For instance, they both feature continuous states and discrete action spaces, making them suitable and convenient for controlled testing of our algorithmic designs. We found that the findings on these datasets are highly transferable to our production scenarios, where distribution shifts can be simulated by different perturbations. 

\subsubsection{CartPole-v1 and LunarLander-v2 Public Data}
For training, we generate the offline dataset with $10^5$ samples using an $\epsilon$-greedy ($\epsilon=0.3$) version of proximal policy optimization (PPO) trained policy, which adds richness to the data by including non-expert behavior \cite{panaganti2022robust}. For evaluation, we record cumulative rewards on perturbed CartPole-v1 and LunarLander-v2 simulators by changing their physical parameter \textit{force\_mag} (to model external force disturbance), adding action perturbations (to model actuator noise), or altering its physical parameter \textit{length} (to model pole itself). 
{In our experiments, the robust dueling DQN and dueling DQN both are using neural network (NN) function approximation with two hidden layers of width 256. The decaying learning rate (LR) is configured as 0.1 * OLD\_LR + 0.9 * OLD\_LR * (1 - total\_steps / max\_train\_steps) with the initial learning rate 1e-4.}

We compare the performance of robust dueling DQN with that of the canonical dueling DQN algorithm in Figure \ref{fig:offline}, where the curves are averaged over 30 different seeded runs and the shaded region indicates the range of [mean - std, mean + std]. Compared to dueling DQN, robust dueling DQN enjoys robust behaviors with slow cumulative reward decay as perturbations increase on \textit{force\_mag}, \textit{length}, and action.

Moreover, we want to emphasize that the proposed robust algorithm takes almost the same training time as its non-robust counterpart, owing to its effective utilization of the IPM uncertainty set structure. In contrast, the SOTA methods such as \cite{panaganti2022robust}\footnote{For a fair comparison, in our experiments, both algorithms are using the same model architectures, i.e., dueling DQN instead of BCQ.} take about 50 times longer than that of our algorithm. \cite{panaganti2022robust} also fails to achieve a positive reward on LunarLander-v2, possibly due to its reliance on a restrictive Assumption 3, i.e., $\min_{s} V(s) = 0$, which doesn't hold in LunarLander-v2.

\subsubsection{Session-Level Dynamic Ad Load Production Data}
\label{subsubsec:-data}
The user behavior will change over time, which may bring severe Y|X-shifts. In other words, even if we adopt the same ad load strategy to similar user groups within similar sessions, the behavior of the next session may differ a lot. For prototyping, we use 1-day production data collected from the company's newsfeed platform and divide it by timestamp at 12:00 PM as the cut line for the training set (before 12:00 PM)  and the test set (after 12:00 PM).

Table \ref{tab:diff-dist-data} displays AUCC results of different approaches, where the discount factor $\gamma$ is chosen as 0.8 for all RL-based methods. Both dueling DQN and T-learner suffer significant performance drops, while robust dueling DQN enjoys better performance in test data with a different distribution. Specifically, the proposed robust dueling DQN improves test uplift AUC (0.7290) by ~5\% and ~25\% compared with non-robust dueling DQN (0.6962) and T-learner (0.5750), respectively. We also illustrate the results of two classic offline RL algorithms, batch-constrained Q-learning (BCQ) \cite{fujimoto2019off} and conservative Q-learning (CQL) \cite{kumar2020conservative}. As we mentioned in Section \ref{subsec:method}, BCQ and CQL don't perform well on production data since their pessimistic designs restrict the generalization power under the AUCC metric.

\subsection{Online Deployment}
\label{subsec:online}
\subsubsection{Model Setup} 
To date, we have successfully deployed %a minimum viable product (MVP) 
our models to multiple product surfaces (e.g., News Feed) of a major social platform, to dynamically optimize session-level ad load to balance engagement and monetization topline goals. %\{remove minimal viable product (MVP) here, which seems introducing confusion} 
We adopt a teacher-student framework to develop the production models: 1) the proposed robust RL model served as the ``teacher model'' that provides the Q-value-based sensitivity score as the teacher label, and 2) a classification and regression tree (CART)-based regression model served as the ``student model'' to learn the mapping between input signals and the teacher labels, which is deployed for production serving. Compared to serving the DQN models directly, this design enables us to achieve comparable prediction quality while offering significantly better interpretability and serving efficiency. While the DQN-based RL models can achieve slightly better prediction accuracy, they often incur infrastructural complications and higher costs, hurting user experiences (e.g., significantly longer latency can cause apps to load much slower). Moreover, in our case, the complexity and cost are prohibited due to the scale of users we are serving. As real-time ad load decisions must be made on the fly after a user enters a session, the tree-based student model allows us to serve online requests at a massive scale with significantly better system performance (e.g., throughput, latency, and reliability). It also offers better interpretability, which is an important requirement to ensure operational maneuverability due to the nature of our ad delivery products. 

Taking one of the major product surfaces as an example (results are quite similar on different surfaces), we show in Table \ref{tab:teacher} that the student model achieved significantly better offline AUCC ($\sim$17\% gain) compared to a standalone tree model without learning from the teacher model. We choose to use the uplift tree here because regression models like CART could not directly learn the causal effect since there are no counterfactual labels without supervision from the teacher models. 
Moreover, we observe that the tree model, aided by the RL teacher model, performs comparably to the RL teacher model alone, but with a slight degradation of AUCC ($\sim$3\% lower).
This suggests that the teacher-student framework can effectively transfer knowledge from the teacher model to the students, ensuring the student models capture the causal relationship between user behavior and ad load decisions adequately.
\begin{table}
\caption{Ablation study for the teacher-student framework}
\centering
\begin{tabular}{lrr}
\toprule
Model type & Test AUCC\\
\midrule
Student model w/ teacher & 0.705 \\
Student model w/o teacher & 0.601  \\
Teacher model  & 0.729  \\
\bottomrule
\end{tabular}
\label{tab:teacher}
\end{table}

\subsubsection{Business Use Cases} There are two typical use cases for our models: 1) \textit{Ad score growth} -- to maximize ad score return (e.g., by increasing ad load) while keeping engagement loss under a set threshold; 2) \textit{Engagement recovery} -- to maximize engagement return (e.g., by reducing ad load) while keeping ad score target at a certain level. Depending on the stage that a product is at, the business use cases can often vary from product to product, and from time to time. 

Our models can be used to serve both purposes. For example, if engagement recovery is the goal, the same model can be used by changing the selection criteria towards the opposite direction as in the ad score growth scenario. 

Because ad score growth is relatively well-studied in the literature, we mainly discuss engagement recovery, although our deployed solutions include both. To measure the engagement impact of an ad load reduction policy accurately, we set up a control group by applying ad load reduction to randomly assigned sessions to keep ``ad impression'' at similar levels. Specifically, ad load is adjusted via various configurations in our ad delivery system. Besides ``ad impression'', we report online outcomes using ``ad score'' (i.e., the total financial value of ads combined with a quality value derived from various user experience indicators) as the ad score-related metric and ``time spent'' (i.e., the total duration user engages with the entire session) as the engagement-related metric. These metrics were chosen due to their strong long-term correlation with our core business goals (e.g., time spent is highly correlated to long-term engagement metrics such as daily active users and session counts). 

\subsubsection{Online Results}
Taking engagement recovery on the same product surface as an example, our production goal is to improve a business metric called \emph{engagement recovery efficiency}. This metric measures the ratio between engagement growth (e.g., increase in time spent) and ad score loss (e.g., decrease in ad score). As shown in Table \ref{tab:online}, the experimental group shows a +0.1\% gain in time spent with an almost neutral ad score cost (-0.002\%, non-statistically significant), resulting in an engagement recovery efficiency at 50. This is significantly higher than the control group (0.32), suggesting that our models are effective in optimizing engagement-ad score trade-offs when engagement recovery is the goal.

Although Table \ref{tab:online} only documents the post-launch backtest experiment conducted over three weeks, similarly positive trends are observed in subsequent three-month post-launch holdout experiments. Our framework has been deployed to multiple production systems to serve both ad score growth and engagement recovery scenarios. It has enabled us to achieve outsized topline business impacts. From the post-launch A/B test on live traffic, on average, we have observed a double-digit improvement in engagement-ad score trade-off efficiency. It has significantly improved our platform's capability to serve both consumers and advertisers effectively.

\begin{table}
\caption{Online readings of experimental groups (proposed approach) and control groups (random assignment) \tablefootnote{This was the initial deployment of ML models for session-level ad load optimization in our company, and thus, no existing control group option was available. Under this scenario, we selected the best candidate through offline experiments and conducted an A/B test against a random assignment.} by decreasing ad load at a similar level. Values are normalized.}
\centering
\begin{tabular}{lrr}
\toprule
Online metrics & Experimental & Control\\
\midrule
Ad impression & -0.52\% & -0.51\% \\
Ad score & -0.002\% & -0.25\% \\
Time spent & +0.1\% & +0.08\% \\
Engagement recovery efficiency & 50 & 0.32\\
\bottomrule
\end{tabular}
\label{tab:online}
\end{table}

\section{Conclusion and Discussion}
\label{sec:concl}
We prototype an offline DQN framework for session-level dynamic ad load optimization to mitigate the challenge of confounding bias and show its promising offline gains compared to current meta-learner approaches. Furthermore, to alleviate the distribution shift, we propose a new offline robust dueling DQN approach and demonstrate its robust performance on both OpenAI Gym public data and session-level dynamic ad load production data. The significant improvement of online A/B tests further illustrates the effectiveness of the proposed approach. 

There are several open directions we plan to explore in the future. For example, we are interested in off-policy evaluation as a long-term metric \cite{jiang2016doubly}. We also plan to build a high-fidelity simulator to better evaluate the performance of RL-based approaches. In addition, 
we are also exploring higher-order MDP-based formulations, which will enable us to model behavioral signals from the entire user's historical trajectory (instead of only the last state and action).
Moreover, instead of simple linear scalarization, we are also interested in more complex nonlinear scalarization between ad monetization and user engagement (e.g., proportional fairness, hard constraints, and max-min trade-off \cite{zhou2022anchor, liu2021policy, liu2021learning}) or determine the convex coverage set of the Pareto frontier \cite{roijers2013survey} based on different business needs. 

Although this paper focuses on session-level dynamic ad load optimization, the proposed approaches are applicable to general dynamic ad load optimization, including more fine-grained request-level optimization with a discrete action space \cite{xue2022resact}. The current DQN-based framework can handle discrete action spaces of any size since $\max_{a \in \mathcal{A}} Q(\cdot, a)$ is feasible in any discrete action space $\mathcal{A}$. Furthermore, to handle continuous action space, which typically requires parameterized policies that can predict continuous action values directly rather than selecting from a discrete set of options, we plan to explore alternative approaches such as deep deterministic policy gradient (DDPG) \cite{lillicrap2015continuous} or soft actor-critic method \cite{haarnoja2018soft} in our future work. For example, a robust DDPG can maintain the standard policy update of DDPG but integrate a new loss function for the robust Q-function update
\begin{align*}
    L_3(B, \theta) = \frac{1}{|B|} \sum_{i=1}^{|B|} (r_i + \gamma Q_{{\theta}^{target}}(s_i', \mu_{\phi^{target}}(s_i')) - \gamma \delta \|{\theta}^{target}_{2:d}\| \\
    - Q_{\theta}(s_i, a_i))^2,
\end{align*}
where $\phi^{target}$ and $\theta^{target}$ represent the target parameters for policy and Q-function, respectively.

\section*{Acknowledgement}
The authors would like to express their gratitude to colleagues Jizhe Zhang, Catherine Zhu, Weiyu Huang, Chen Fu, Yang Yang, Lin Gong, Zheqing Zhu, and Wei Lu for their insightful discussions and comments, which have greatly enhanced the content of this work. We are thankful for their time and effort in offering valuable feedback.

%%
%% The next two lines define the bibliography style to be used, and
%% the bibliography file.
\bibliographystyle{ACM-Reference-Format}
% \bibliography{sample-base}
%%% -*-BibTeX-*-
%%% Do NOT edit. File created by BibTeX with style
%%% ACM-Reference-Format-Journals [18-Jan-2012].

%%
%
% If your work has an appendix, this is the place to put it.
\appendix
\newpage

\section{Area Under Cost Curve}
In this section, we demonstrate specific AUCC figures to supplement the advantages of the proposed approaches described in Sections \ref{subsec:prod-same} and \ref{subsec:data-diff}. Specifically, Figure \ref{fig:dqn-auuc} illustrates test AUCC results on session-level dynamic ad load production data from the same distribution, while Figure \ref{fig:robust-auuc} displays test AUCC on production data with distribution shifts.

\label{appsec:curve}
\begin{figure}[b]
    \centering
    \includegraphics[width=0.43\textwidth]{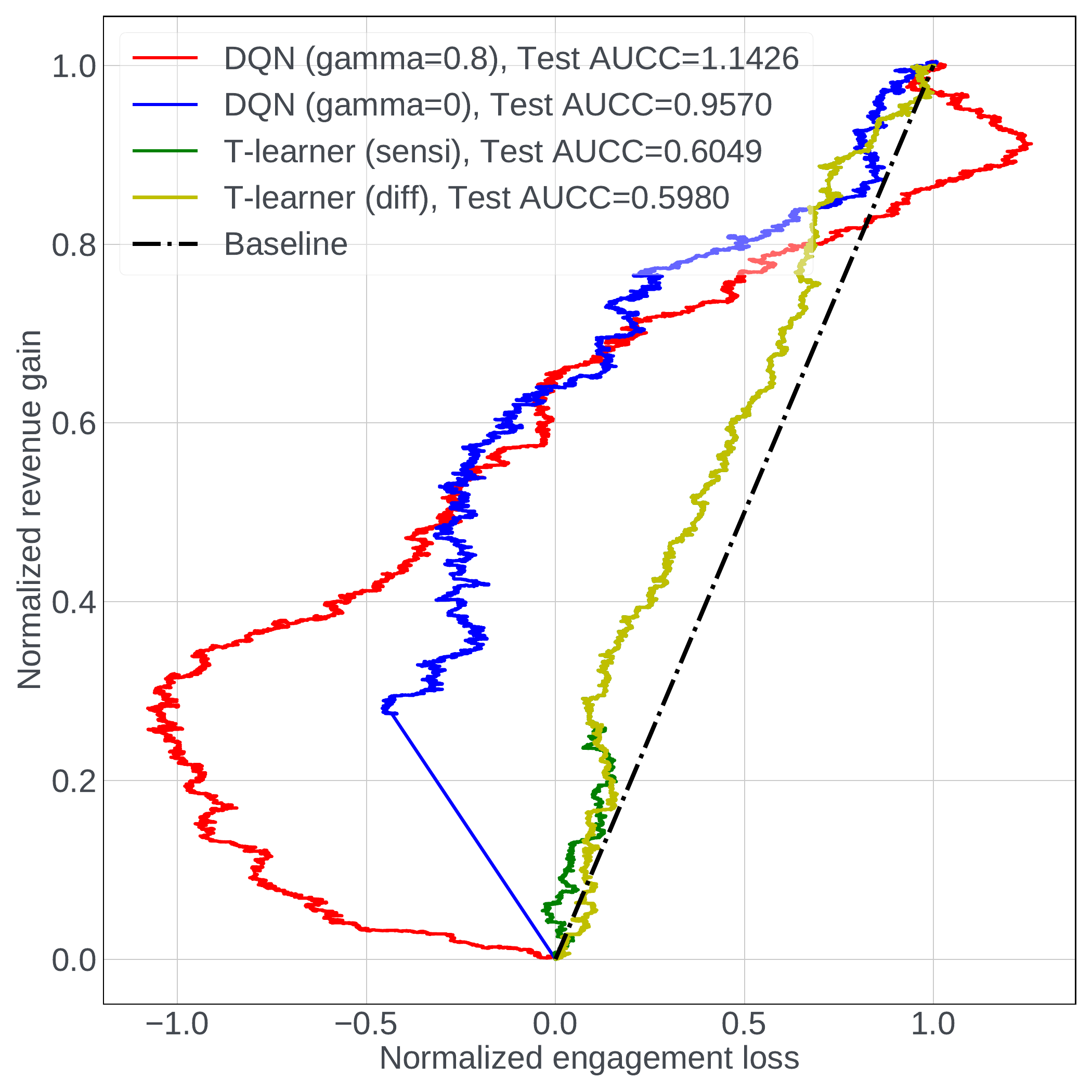}
    \caption{Test AUCC of offline DQN and T-learner on session-level production data.}
    \label{fig:dqn-auuc}
\end{figure}

\begin{figure}[b]
    \centering
    \includegraphics[width=0.43\textwidth]{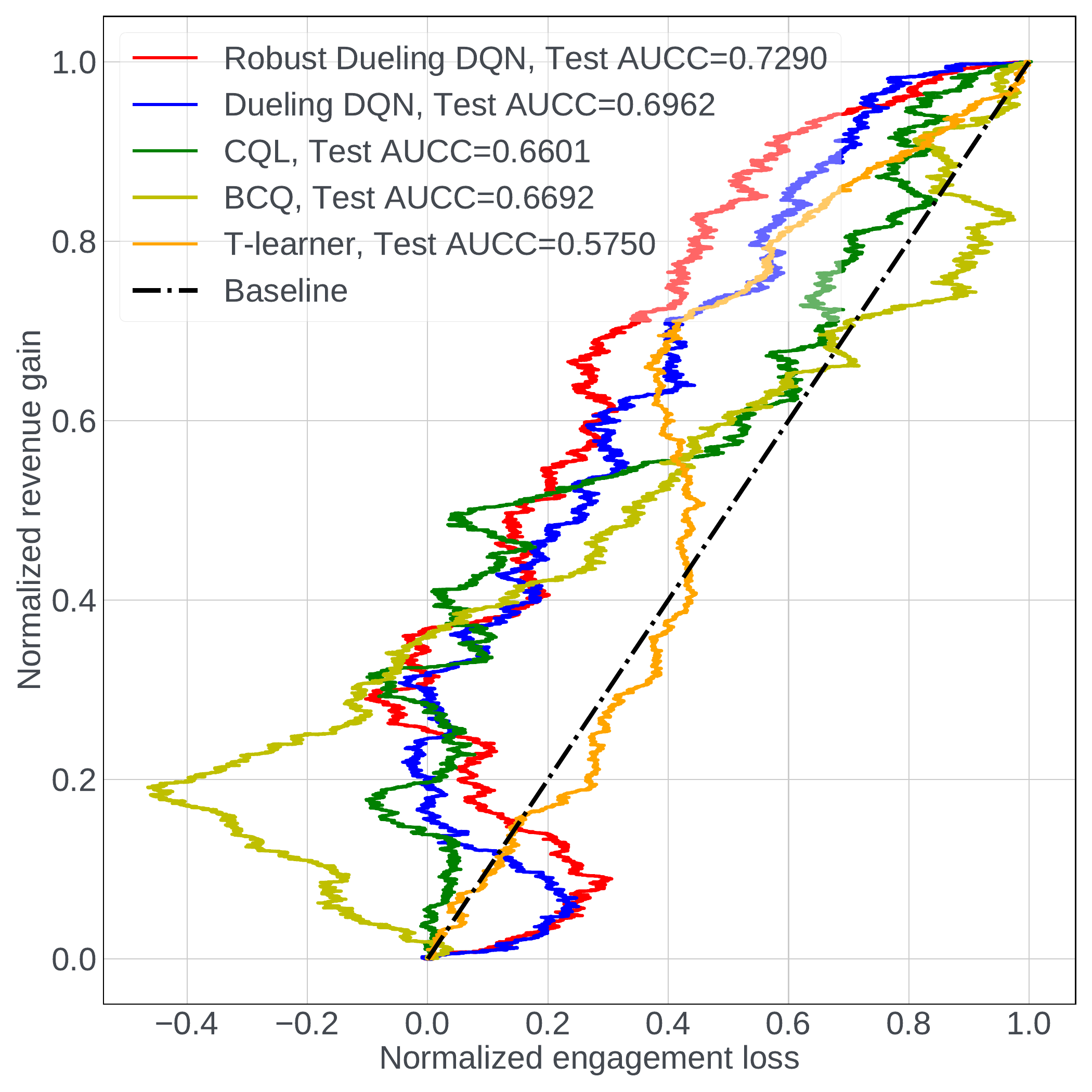}
    \caption{Test AUCC of robust dueling DQN, dueling DQN, CQL, BCQ, and T-learner on session-level production data.}
    \label{fig:robust-auuc}
\end{figure}

\section{Theoretical Convergence Guarantees}
Although no theoretical convergence guarantees can be derived for offline robust dueling DQN, its simplified version offline robust fitted Q-iteration (FQI) enjoys an approximate optimality under some mild assumptions (Theorem \ref{thm:rfqi-converge}). Specifically, the offline robust FQI with an IPM uncertainty set follows
\begin{align}
\label{eqn:rfqi-ipm}
    f_t \in \argmin_{f \in \Fc'} \sum_{i=1}^{|B|} \left(r_i + \gamma \max_{a_i' \in \Ac} f_{t-1}(s_i', a_i') - \gamma \delta \|w_{t-1, 2:d}\| - f(s_i, a_i) \right)^2,
\end{align}
where $\Fc' := \{(s, a) \mapsto f_w(s, a): w \in \Rb^d, \|w\| \leq 1, f_w(s, a) \in [0, 1/(1-\gamma)]\}$ and $\pi_t(s) = \argmax_{a \in \Ac} f_t(s, a), \forall s \in \Sc$. We then have the following theorem to demonstrate the convergence guarantee of offline robust FQI.

\begin{theorem}
\label{thm:rfqi-converge}
Assume $\frac{d^{\pi, P^{\pi}}(s, a)}{d^{\pi_{\beta}, P^0}(s, a)} \leq C, \forall \pi, s, a$, $\epsilon_{approx, d^{\pi_{\beta}, P^0}} := \max_{f \in \Fc} \min_{f' \in \Fc} \|f' - \Tc_{\Pc} f\|^2_{2, d^{\pi_{\beta}, P^0}}$, $\delta \leq 1/(1-\gamma)$, and $\Tc_{\Pc}$ is a $\beta$-contraction mapping w.r.t. the norm associated with any state-action distribution, then for any $T > 0$, robust FQI with an IPM uncertainty set (Equation (\ref{eqn:rfqi-ipm})) guarantees that with probability $1 - \delta$,
\begin{align}
    V_{\Pc}^* - V_{\Pc}^{\pi_T} \leq& \frac{1}{(1-\gamma)(1-\beta)} \left(\sqrt{\frac{88 C \ln (\frac{|\Fc'|^2 T}{\delta})}{N (1-\gamma)^2}} + \sqrt{20 C \epsilon_{apporx, d^{\pi_{\beta}, P^0}}}\right) \notag \\
    &+ \frac{\beta^T}{(1-\gamma)^2}.
\end{align}
\end{theorem}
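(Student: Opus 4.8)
The plan is to treat this as a fitted-Q-iteration (FQI) analysis and to decompose the suboptimality $V_{\Pc}^* - V_{\Pc}^{\pi_T}$ into three additive sources that match the three terms in the bound: a per-iteration statistical (regression) error, the function-approximation error $\epsilon_{approx, d^{\pi_{\beta}, P^0}}$, and an iteration error that decays geometrically as $\beta^T$. Throughout I would rely on the two structural facts already established in the excerpt, namely that the empirical robust Bellman operator of Equation (\ref{eqn:emp-bell}) is an \emph{unbiased} estimate of $\Tc_{\Pc}$, and that $\Tc_{\Pc}$ is a $\beta$-contraction (Proposition \ref{prop:contract}).

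First I would analyze a single iteration. Fixing $f_{t-1}$, the regression target $r_i + \gamma \max_{a_i'} f_{t-1}(s_i', a_i') - \gamma \delta \|w_{t-1, 2:d}\|$ has conditional mean exactly $(\Tc_{\Pc} f_{t-1})(s_i, a_i)$, because the regularizer $-\gamma \delta \|w_{t-1, 2:d}\|$ is deterministic given $f_{t-1}$ while all the randomness sits in the $s_i'$-dependent term. The range of the targets is controlled by $\delta \le 1/(1-\gamma)$ together with the value bound $[0, 1/(1-\gamma)]$ built into $\Fc'$. A Bernstein-type least-squares generalization bound, taken uniformly over the finite class $\Fc'$ and union-bounded over the $T$ rounds (which produces the $\ln(|\Fc'|^2 T / \delta)$ factor), then shows that $\|f_t - \Tc_{\Pc} f_{t-1}\|^2_{2, \mu}$ is controlled, up to absolute constants, by a statistical term plus $\epsilon_{approx, d^{\pi_{\beta}, P^0}}$, where $\mu := d^{\pi_{\beta}, P^0}$ is the data distribution and the $(1-\gamma)^{-2}$ inside the square root reflects the squared range of the regression targets.

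Next I would propagate these per-iteration errors across rounds. The key device is the concentrability assumption $d^{\pi, P^{\pi}} / d^{\pi_{\beta}, P^0} \le C$, which converts errors measured under the data distribution $\mu$ into errors under the visitation distribution of any comparator policy, introducing the $\sqrt{C}$ factors. Telescoping the Bellman errors and repeatedly invoking the $\beta$-contraction of $\Tc_{\Pc}$ yields a geometric series summing to $(1-\beta)^{-1}$, while the contribution of the initial iterate $f_0$ decays as $\beta^T$ and, after scaling by the value range, contributes the final $\beta^T / (1-\gamma)^2$ term. A robust performance-difference argument then converts the accumulated $Q$-function error into the value suboptimality $V_{\Pc}^* - V_{\Pc}^{\pi_T}$, supplying the outer factor $(1-\gamma)^{-1}$ and assembling the stated bound.

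The main obstacle is the change-of-measure step in the \emph{robust} setting. The concentrability coefficient is stated against the adversarial transition $P^{\pi}$ that attains the worst case defining $V_{\Pc}^{\pi}$, rather than the nominal transition $P^0$ under which the offline data $\Dc_{P^0}$ was generated. Correctly coupling this policy-dependent worst-case transition to the contraction of $\Tc_{\Pc}$, so that the propagated errors remain measured under a distribution dominated by $\mu$, is the delicate part; this is exactly what the form of the assumption (numerator $d^{\pi, P^{\pi}}$, denominator $d^{\pi_{\beta}, P^0}$) is designed to accommodate, and keeping the resulting constants explicit through the telescoping is where most of the care is required.
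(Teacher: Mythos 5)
Your proposal is correct and takes essentially the same route as the paper: the paper's own proof is a short sketch that instantiates the standard offline FQI analysis (Theorem 4.3 of Agarwal et al.) with precisely the ingredients you describe --- unbiasedness of the empirical robust Bellman operator as the regression target, the target-range bound $|\Tc_{\Pc} f(s,a)| \le 2/(1-\gamma)$ enabled by $\delta \le 1/(1-\gamma)$, the $\beta$-contraction of $\Tc_{\Pc}$ for propagating errors across iterations, and a robust performance difference lemma to convert accumulated Bellman error into value suboptimality. Your closing observation about the delicate change of measure (the concentrability numerator involving the worst-case transition $P^{\pi}$) is exactly why the paper substitutes the robust performance difference lemma of Zhou et al., whose visitation distribution is taken under the adversarial kernel $\kappa' = \arg\inf_{P \in \Pc} V_{P}^{\pi'}(s_0)$, for the standard one.
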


\begin{proof}[Proof Sketch]
Theorem \ref{thm:rfqi-converge} can be proved following steps similar to those in the proof of Theorem 4.3 in \cite{agarwal2019reinforcement} with three different steps. The first is adopting robust performance difference lemma \cite{zhou2023natural} instead of performance difference lemma, i.e., for any state $s_0$ and policy $\pi, \pi'$, $V_{\Pc}^{\pi}(s_0) - V_{\Pc}^{\pi'}(s_0) \leq \frac{1}{1-\gamma} \Eb_{s \sim d_{s_0}^{\pi', \kappa'}} \Eb_{a \sim \pi'(\cdot|s)} [-A^{\pi}(s, a)]$, where $\kappa' := \arg\inf_{P \in \Pc} V_{P}^{\pi'}(s_0)$. The second is the fact that \\
$|\Tc_{\Pc} f(s, a)| \leq 1 + \frac{\gamma}{1-\gamma} + \delta \gamma \leq \frac{2}{1-\gamma}, \forall (s, a) \in \Sc \times \Ac$. The third is a contraction of the robust Bellman operator via the general function approximation version of Proposition \ref{prop:contract}.
\end{proof}

\textbf{Remark.} Similar to non-robust offline RL \cite{agarwal2019reinforcement, chen2019information, xie2021bellman}, the first two assumptions are necessary for the exploration power of data generating distribution $d^{\pi_{\beta}, P^0}$ and the representation power of general function class $\Fc'$. The third and fourth mild assumptions are for robust offline RL, which limits the radius of the uncertainty set to a reasonable range and extends Proposition \ref{prop:contract} to its general function approximation version. Compared with non-robust offline FQI convergence guarantees (cf. Theorem 4.3 in \cite{agarwal2019reinforcement}), Theorem \ref{thm:rfqi-converge} maintains a similar convergence rate. Although Theorem \ref{thm:rfqi-converge} is specific to offline robust FQI instead of offline robust dueling DQN, it provides some intuition that our design for robustness will hardly affect the convergence rate of non-robust algorithms under some mild assumptions, especially when the radius of the uncertainty set is small.

\end{document}